\algrenewcommand{\algorithmicrequire}{\textbf{Input:}}
\algrenewcommand{\algorithmicensure}{\textbf{Output:}}
\newtheorem{remark}{Remark}
\newtheorem{theorem}{Theorem}
\newtheorem{problem}{Problem}[section]
\newtheorem{lemma}{Lemma}
\newtheorem{definition}{Definition}[section]
\newcounter{subproblem}[problem] 
\newcommand{\cblue}[1]{\textcolor{blue}{#1}}
\newcommand{\cred}[1]{\textcolor{red}{#1}}
\newcommand{\mycomment}[1]{}
\renewcommand{\t}[0]{^{\intercal}}
\newcommand{\inv}[0]{^{-1}}
\begin{document}

\title{SDP Synthesis of Maximum Coverage Trees for Probabilistic Planning under Control Constraints}

\author{Naman Aggarwal, and Jonathan P. How \\ 
Laboratory for Information and Decision Systems, Massachusetts Institute of Technology}



%

\maketitle

\begin{abstract}
    The paper presents Maximal Covariance Backward Reachable Trees (MAXCOVAR BRT), which is a multi-query algorithm for planning of dynamic systems under stochastic motion uncertainty and constraints on the control input with explicit coverage guarantees. In contrast to existing roadmap-based probabilistic planning methods that sample belief nodes randomly and draw edges between them \cite{csbrm_tro2024}, under control constraints, the reachability of belief nodes needs to be explicitly established and is determined by checking the feasibility of a non-convex program. Moreover, there is no explicit consideration of coverage of the roadmap while adding nodes and edges during the construction procedure for the existing methods. Our contribution is a novel optimization formulation to add nodes and construct the corresponding edge controllers such that the generated roadmap results in provably maximal coverage under control constraints as compared to any other method of adding nodes and edges. We characterize formally the notion of coverage of a roadmap in this stochastic domain via introduction of the h-$\operatorname{BRS}$ (Backward Reachable Set of Distributions) of a tree of distributions under control constraints, and also support our method with extensive simulations on a 6 DoF model.
\end{abstract}
\IEEEpeerreviewmaketitle
\section{Introduction}

Multi-query motion planning entails the design of plans that can be reused across different initial configurations of the system. This is typically done via the offline construction of a roadmap of feasible trajectories in the state space, such that in real-time, for a pair of initial and goal configurations of the system, planning proceeds by connecting the initial and goal configurations to the roadmap followed by graph search to find a path \cite{prm1996,eirm}. The pre-computation of a roadmap is beneficial since it avoids the computational burden of finding plans from scratch for new configurations, and reuses search effort across queries. Such an approach of constructing a roadmap of feasible trajectories facilitates robust planning since a plan to meet the mission requirements could be computed quickly if the system finds itself in unexpected configurations due to unmodeled dynamics or disturbances during online operation. An important design consideration for any roadmap based planning algorithm is the coverage of the roadmap -- it is desirable to be able to re-use search effort across as many queries as possible, therefore, reasoning explicitly about the coverage of the roadmap becomes an important algorithmic design consideration.

There is extensive work on motion planning via the construction of reusable roadmaps for deterministic systems \cite{tedrake2010lqr,majumdar2011multi,majumdar2017funnel,pipx}. The work proposed in \cite{tedrake2010lqr}, for instance, builds a tree of \textit{funnels}, which are regions of finite-time invariance around trajectories with associated feedback controllers, backwards from the goal in a space-filling manner. Paths to the goal could be found from different regions of the state space by traversing the branches of the constructed tree. Ref.~\cite{majumdar2017funnel} takes a different approach by populating a library with a finite set of feasible trajectories with associated funnels, such that the online planning proceeds through the sequential composition of the available funnels in the library. 

\begin{figure}[t]
\vspace*{-0.15in}
    \centering
    \includegraphics[width=0.5\textwidth]{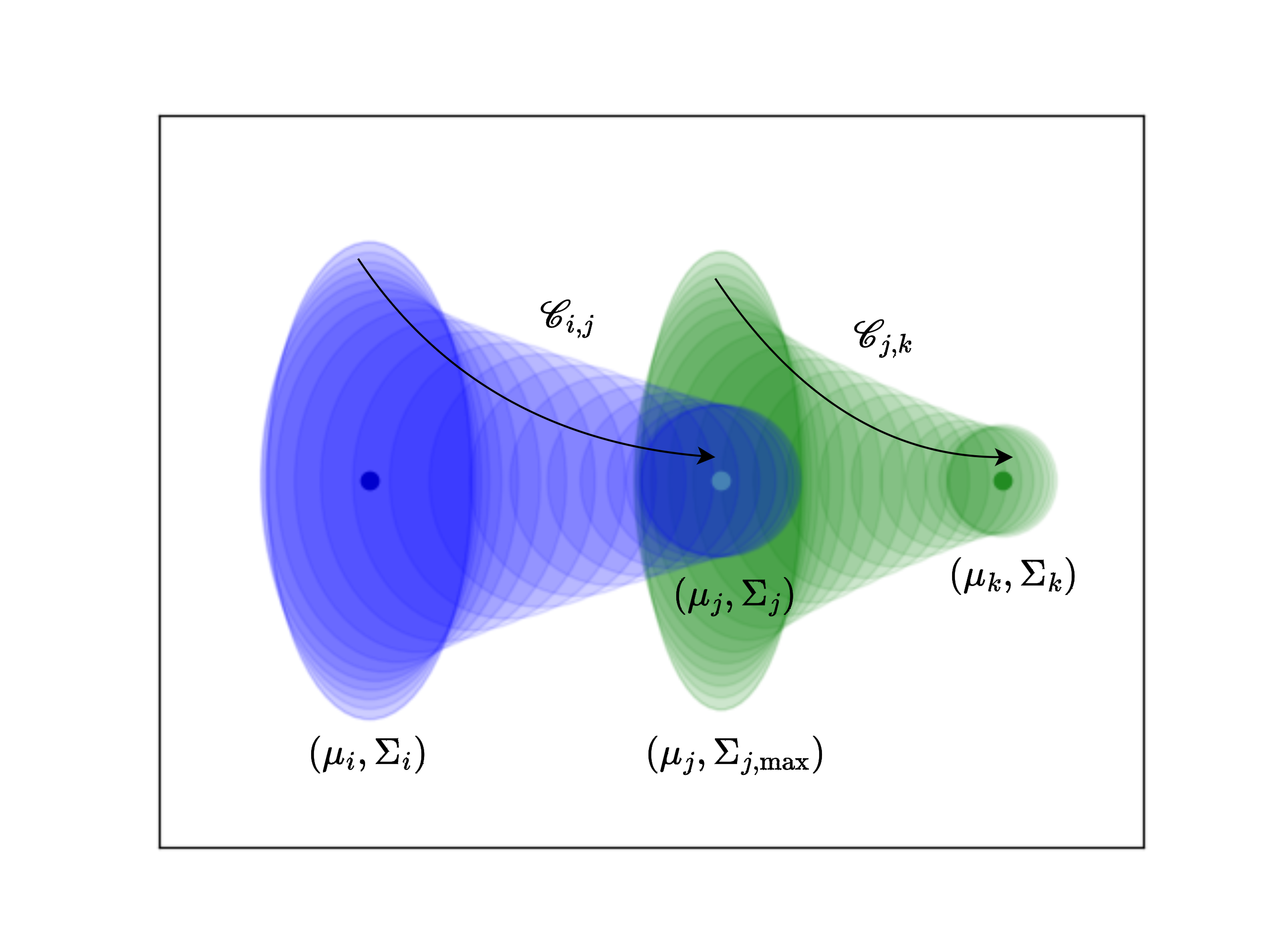}
\vspace*{-0.5in}
    \caption{An illustration of the recursive satisfaction of constraints (goal distribution reaching constraint, and chance constraints on the state and control input) for the overall trajectory initialized at the $(\mu_{i}, \Sigma_{i})$ distribution through sequential composition of two $N$-step controllers, $ \mathscr{C}_{i,j}$ (designed for the $i \rightarrow j$ maneuver) and $\mathscr{C}_{j,k}$ (designed for the $j_{\mathrm{max}} \rightarrow k$ maneuver).}
\end{figure}

In this paper, we are concerned with planning for systems with stochastic dynamics. A stochastic framework helps by making informed plans considering the probability of constraint violation and leads to a lesser conservative approach as compared to a deterministic framework considering worst-case uncertainty in the dynamics or enforcing hard constraints. CC-RRT \cite{luders2010chance} is a popular framework for single-query motion planning in stochastic systems. The algorithm proceeds by growing a forwards search tree of distributions over the state space, such that the obstacle avoidance constraints are satisfied probabilistically along each edge. The tree expansion phase involves randomly sampling open-loop control signals and simulating the system dynamics forward until the chance constraints on obstacle avoidance are satisfied, and adding the forward simulated state distribution to the tree as a node with the associated open-loop control sequence as the edge. A major limitation of constructing a search tree via sampling open-loop control signals is that the the second order moment of the state distributions (covariance) cannot be controlled via open-loop control even for the simplest case of linear-Gaussian systems. CC-RRT comes with a probabilistic completeness guarantee such that the algorithm finds paths to all goal distributions for which open-loop paths would exist \cite{luders2014thesis}. However, the set of all possible distributions that could be reached with open-loop control is a subset of the set of all distributions that could be reached with a richer control scheme such as feedback control. Particularly, for a mission with tight goal reaching constraints (a specific size of the goal covariance as required by the mission), CC-RRT might fail to find a path even though a closed-loop path might exist.

While CC-RRT is a single-query planner, there exist other roadmap based methods for planning in stochastic domains. FIRM \cite{agha2014firm} is a planning framework that builds a roadmap in the belief space by sampling stationary belief nodes and drawing belief stabilizing controllers as edges between the nodes. The closest in approach to our work, however, is the recently proposed CS-BRM \cite{csbrm-tro} that provides a faster planning scheme as compared to FIRM by allowing sampling of non-stationary belief nodes and adding finite-time feedback controllers as edges. CS-BRM is built on top of a series of latest work on covariance steering that allows finite-time satisfaction of terminal covariance constraints \cite{okamoto2018optimal,liu2022optimal,rapakoulias2023discrete}.
CS-BRM, however, does not assume control input constraints, and adds edges between randomly sampled belief nodes. In the absence of control constraints, the distribution steering problem decouples into the mean steering and covariance steering problem: both of which are tractable to solve \cite{rapakoulias2023discrete}. In the presence of control input constraints however, the reachability of belief nodes needs to be explicitly established. Even for the simplest case of a controllable linear-Gaussian system, it is no longer possible to drive arbitrary distributions to arbitrary distributions, and the existence of a steering maneuver for a pair of belief nodes must be determined by checking the feasibility of a nonconvex program. Therefore, CS-BRM is not efficient for settings with constraints on the control input due to unnecessary sampling of belief nodes that would not be added to the roadmap due to the non-existence of a steering control. 

Moreover, even for successful insertion of belief nodes, there is no explicit consideration of coverage while selecting nodes during the roadmap construction procedure, neither a post-analysis on the coverage characteristics of the roadmap generated by CS-BRM. We provide a systematic procedure that reasons about the coverage of the to-be-generated roadmap while adding nodes through a novel optimization formulation. The contributions of the paper are as follows:
\begin{itemize}
    \item  We characterize the notion of coverage of a roadmap in the stochastic domain formally via what we call h-$\operatorname{BRS}$ (Backward Reachable Set of Distributions) of the tree, which is the set of all distributions that can reach the goal in h-hops where 1 hop is a finite-time horizon of N-steps under control constraints.
    \item We propose a novel method of edge construction such that the BRT constructed following our method finds paths from provably the largest set of initial distributions for the overall planning problem, i.e.\ provides maximal coverage.
    \item We supplement our approach with extensive simulations on a 6 DoF model.
\end{itemize}

\section{Problem Statement}\label{sec:problem_statement}
Consider a discrete-time stochastic linear system represented by the following difference equation
\begin{equation}\label{eq:dynamics}
    \mathbf{x}_{t+1} = A \mathbf{x}_{t} + B \mathbf{u}_{t} + D \mathbf{w}_{t}
\end{equation}
where $\mathbf{x}_{t} \in \mathcal{X}$, $\mathbf{u}_{t} \in \mathcal{U}$ are the state and control inputs at time $t$, respectively, $\mathcal{X} \subseteq \mathbb{R}^{n}$ is the state space, $\mathcal{U} \subseteq \mathbb{R}^{m}$ is the control space, $n, m \in \mathbb{N}$, $D \in \mathbb{R}^{n \times n}$ and $\mathbf{w}_{t}$ represents the stochastic disturbance at time $t$ that is assumed to have a Gaussian distribution with zero mean and unitary covariance. We also assume that $\{w_{t}\}$ is an \textit{i.i.d.} process, and that $A$ is non-singular. Note that we use the notation $\mathcal{N}(\mu, \Sigma)$ to denote a Gaussian distribution with mean $\mu$ and covariance $\Sigma$.

We define a finite-horizon optimal control problem, $\operatorname{OPT-STEER}(\mathcal{I}, \mathcal{G}, N)$, where $\mathcal{I}$, $\mathcal{G}$, $N$ are the initial distribution, goal distribution, and the time-horizon corresponding to the control problem respectively. $\operatorname{OPT-STEER}(\mathcal{I}, \mathcal{G}, N)$ solves for a control sequence that is optimal with respect to a performance index $J$, and that steers the system from an initial distribution $\mathcal{I}$ to a goal distribution $\mathcal{G}$ in a time-horizon of length $N$ while avoiding obstacles such that the control sequence respects the prescribed constraints on the control input at each time-step:
\begin{equation}
    \min _{ \Phi_{k} } \quad J=\mathbb{E}\left[\sum_{k=0}^{N-1} \mathbf{x}_k^{\top} Q_k \mathbf{x}_k + \mathbf{u}_k^{\top} R_k \mathbf{u}_k\right] \tag{OPT-STEER} \label{eq:J}
\end{equation}
such that, for all $k = 0, 1, \cdots N - 1$,
\begin{subequations}
\begin{align}
    &\mathbf{x}_{k+1} = A\mathbf{x}_{k} + B\mathbf{u}_{k} + D \mathbf{w}_{k}, \label{eq:opt_dynamics} \\
    &\mathbf{x}_{0} \sim \mathcal{N}(\mu_{\mathcal{I}}, \Sigma_{\mathcal{I}}), \label{eq:opt_init_dist} \\
    &\mathbf{x}_{N} \sim \mathcal{N}(\mu_{N}, \Sigma_{N}), \label{eq:opt_final_dist} \\
    &\mu_{N} = \mu_{\mathcal{G}}, \label{eq:opt_goal_mean} \\
    &\Sigma_{N} \preceq \Sigma_{\mathcal{G}}, \label{eq:opt_goal_covar} \\
    &\mathbb{P}(\mathbf{x}_{k} \in \mathcal{X}) \geq 1- \epsilon_{x}, \label{eq:opt_state_constraint} \\
    &\mathbb{P}(\mathbf{u}_{k} \in \mathcal{U}) \geq 1- \epsilon_{u}, \label{eq:opt_control_constraint}
\end{align}
\end{subequations}
where $\Phi_{k}(.)$ is the parameterization for the finite-horizon controller such that $ \mathbf{u}_{k} \coloneqq \Phi_{k}(\mathbf{x}_{k}) $.

Under constraints on the control input of the form (\ref{eq:opt_control_constraint}), not all \textit{instances} of $\operatorname{OPT-STEER}(\mathcal{I}, \mathcal{G}, N)$ have a solution where an \textit{instance} is described by the tuple $(\mathcal{I}, N)$ of the initial distribution and the planning horizon for a fixed goal distribution $\mathcal{G}$. In particular, $ \operatorname{OPT-STEER}(\mathcal{I}, \mathcal{G}, N) $ has a solution if and only if the constraint set of the optimization problem defined by the set of equations (\ref{eq:opt_dynamics})-(\ref{eq:opt_control_constraint}) is non-empty in the space of the decision variables $\{\Phi_{k}\}_{k=0}^{N-1}$. 
Alternatively, under control constraints, the existence of an $N$-step controller that steers the system from an initial distribution to a goal distribution needs to be established explicitly by solving a feasibility question even when the underlying system $(A, B)$ is controllable. This existence (or not) of a control sequence for a planning instance defined by the triplet $\mathcal{I}, \mathcal{G}, N$ is a special artifact of the presence of control input constraints in distribution steering (see Section~\ref{sec:cs_review}). This paper presents solutions to the following problem:
\begin{problem}\label{problem:problem_eng}
    Find paths to the goal distribution $\mathcal{G}$ from all query initial distributions $\mathcal{I}$ for which paths (respecting all the constraints) exist.
\end{problem}
Note that we use the terms \textit{path} and \textit{control sequence} interchangeably in this paper. Problem~\ref{problem:problem_eng} is an instance of multi-query planning, and is a difficult problem to solve in it's full generality.
We employ graph-based methods to build a roadmap in the space of distributions of the state of the system such that the roadmap could be used across query initial distributions.


We propose a novel method of edge construction such that the edge controller can be re-used across a wider set of initial distributions for the edge, and such that the BRT constructed following the novel method of edge construction finds paths from provably the largest set of initial distributions for the overall planning task. We supplement our approach with extensive simulations.

We begin with a discussion on the feasibility of a covariance steering problem instance in Section~\ref{sec:cs_review}, followed by a review of an existing tractable method for covariance steering in Section~\ref{sec:convex_relax}. We discuss our approach in Section~\ref{sec:approach} followed by a discussion on experiments in Section~\ref{sec:experiments}.
\section{Finite Horizon Covariance Steering under Control Constraints}\label{sec:prelim}
In this section, we discuss the finite horizon covariance steering problem, and the feasibility of any covariance steering problem instance.
\subsection{Feasibility}\label{sec:cs_review}
The development in this subsection follows closely \cite{liu2022optimal} \cite{rapakoulias2023discrete}. We consider a linear state feedback parameterization for the controller to solve (\ref{eq:J}) as follows,
\begin{equation}\label{eq:control_param}
    \mathbf{u}_{k} = K_{k}(\mathbf{x}_{k} - \mu_{k}) + v_{k}
\end{equation}
where $ K_{k} \in \mathbb{R}^{n \times m} $ is the feedback gain that controls the covariance dynamics, and $ v_{k} \in \mathbb{R}^{n}$ is the feedforward gain that controls the mean dynamics. Under a linearly parameterized controller as described in (\ref{eq:control_param}), the state distribution remains Gaussian at all times and we can express the objective $J$ (\ref{eq:J}) completely in terms of the first and second order moments of the state process: 
$$
J \!\! = \!\! \sum_{k=0}^{N-1} \operatorname{tr}({Q_{k}\Sigma_{k}}) + \operatorname{tr}(R_{k} K_{k} \Sigma_{k} K^{\intercal}_{k} ) + \mu^{\intercal}_{k} Q_{k} \mu_{k} + v_{k}^{\intercal} R_{k} v_{k}, \label{eq:J_simp}
$$
We consider polytopic state and control constraints of the form $ \mathcal{X} \coloneqq \{ x_{k} \in \mathbb{R}^{n} \ \vert \ \alpha^{\intercal}_{x} x_{k} \leq \beta_{x} \} $, $ \mathcal{U} \coloneqq \{ u_{k} \in \mathbb{R}^{m} \ \vert \ \alpha^{\intercal}_{u} u_{k} \leq \beta_{u} \} $ such that,
\begin{align}
    \mathbb{P}( \alpha^{\intercal}_{x} \mathbf{x}_{k} \leq \beta_{x} ) &\geq 1 - \epsilon_{x} \label{eq:state_chance_constraint}, \\
    \mathbb{P}( \alpha^{\intercal}_{u} \mathbf{u}_{k} \leq \beta_{u} ) &\geq 1 - \epsilon_{u}, \label{eq:control_chance_constraint}
\end{align}
where $ \alpha_{x} \in \mathbb{R}^{n} $, $ \alpha_{u} \in \mathbb{R}^{m} $, and $\beta_{x}, \beta_{u} \in \mathbb{R}$. $\epsilon_{x}, \epsilon_{u} \in \left[0, 0.5\right]$ represent the tolerance levels with respect to state and control constraint violation respectively, and $ \{\alpha_{x} \mathbf{x}_{k}\}^{N-1}_{k=0} $ and $ \{\alpha_{u} \mathbf{u}_{k}\}^{N-1}_{k=0} $ are univariate random variables with 
first and second order moments,
\begin{align}
&\mathbb{E}( \alpha_{x} \mathbf{x}_{k} ) = \alpha_{x} \mu_{k} \label{eq:state_constraint_first_moment} \\
&\mathbb{E}( \alpha_{u} \mathbf{u}_{k} ) = \alpha_{u} v_{k} \label{eq:control_constraint_first_moment} \\
&\mathbb{E}( \alpha^{\intercal}_{x} ( \mathbf{x}_{k} - \mu_{k} ) ( \mathbf{x}_{k} - \mu_{k} )^{\intercal} \alpha_{x}) = \alpha^{\intercal}_{x} \Sigma_{k} \alpha_{x} \label{eq:state_constraint_second_moment} \\
 &\mathbb{E}( \alpha^{\intercal}_{u} ( \mathbf{u}_{k} - v_{k} ) ( \mathbf{u}_{k} - v_{k} )^{\intercal} \alpha_{u}) = \alpha\t_{u} K_{k} \Sigma_{k} K\t_{k} \alpha_{u} \label{eq:control_constraint_second_moment}.
\end{align}
Ref.~\cite{okamoto2018optimal} shows that the chance constraints can be written as,
\begin{subequations}
\begin{align}
    &\Phi^{-1}(1-\epsilon_{x}) \sqrt{ \alpha^{\intercal}_{x} \Sigma_{k} \alpha_{x} } + \alpha^{\intercal}_{x} \mu_{k} - \beta_{x} \leq 0, \label{eq:state_soft_simplified} \\
    &\Phi^{-1}(1-\epsilon_{u}) \sqrt{ \alpha\t_{u} K_{k} \Sigma_{k} K\t_{k} \alpha_{u} } + \alpha^{\intercal}_{u} v_{k} - \beta_{u} \leq 0, \label{eq:control_soft_simplified}
\end{align}
\end{subequations}
where $ \Phi^{-1}(.)$ is the inverse cumulative distribution function of the normal distribution. Therefore, the optimization problem \ref{eq:J} can be recast as the  nonlinear program,
\begin{align}
    \min_{ \Sigma_{k}, K_{k}, \mu_{k}, v_{k} } J = \sum_{k=0}^{N-1} &\operatorname{tr}({Q_{k}\Sigma_{k}}) + \operatorname{tr}(R_{k} K_{k} \Sigma_{k} K^{\intercal}_{k} ) \nonumber \\ &+ \mu^{\intercal}_{k} Q_{k} \mu_{k} + v_{k}^{\intercal} R_{k} v_{k}, \label{eq:J_simp}
\end{align}
such that for all $k = 0, 1, \cdots, N - 1$,
\begin{subequations}
\begin{align}
    &\begin{aligned}
        \Sigma_{k+1} = A &\Sigma_{k} A^{\intercal} + B K_{k}\Sigma_{k} A\t + A\Sigma_{k} K\t_{k} B\t \nonumber \\ &+ B K_{k} \Sigma_{k} K\t_{k} B\t + DD\t,
    \end{aligned} \tag{\ref{eq:J_simp}a} \label{eq:covar_prop_nlp} \\
    &\Sigma_{0} = \Sigma_{\mathcal{I}}, \tag{\ref{eq:J_simp}b} \label{eq:covar_init_nlp} \\
    &\Sigma_{N} \preceq \Sigma_{\mathcal{G}}, \tag{\ref{eq:J_simp}c} \label{eq:covar_goal_nlp} \\
    &\mu_{k+1} = A \mu_{k} + B v_{k}, \tag{\ref{eq:J_simp}d} \label{eq:mean_prop_nlp} \\
    &\mu_{0} = \mu_{\mathcal{I}}, \tag{\ref{eq:J_simp}e} \label{eq:mean_init_nlp} \\
    &\mu_{N} = \mu_{\mathcal{G}}\tag{\ref{eq:J_simp}f} \label{eq:mean_goal_nlp}, \\
    &\Phi^{-1}(1-\epsilon_{x}) \sqrt{ \alpha^{\intercal}_{x} \Sigma_{k} \alpha_{x} } + \alpha^{\intercal}_{x} \mu_{k} - \beta_{x} \leq 0, \tag{\ref{eq:J_simp}g} \label{eq:state_constraint_nlp} \\
    &\Phi^{-1}(1-\epsilon_{u}) \sqrt{ \alpha\t_{u} K_{k} \Sigma_{k} K\t_{k} \alpha_{u} } + \alpha^{\intercal}_{u} v_{k} - \beta_{u} \leq 0. \tag{\ref{eq:J_simp}h} \label{eq:control_constraint_nlp}
\end{align}
\end{subequations}
The existence of an $N$-step steering control that drives the state distribution from $\mathcal{I}$ to $\mathcal{G}$ satisfying all the constraints is determined by the feasibility of the set of equations (\ref{eq:covar_prop_nlp})-(\ref{eq:control_constraint_nlp}) which represents a non-convex set in the space of the decision variables $ \{K_{k}\}_{k=0}^{N-1}, \{v_{k}\}_{k=0}^{N-1} $. Determining if a non-convex set is empty or not from it's algebraic description is an NP-hard problem in general, and the complexity of this feasibility check scales with the time-horizon since the problem size becomes larger and the number of variables increase. Problem~\ref{problem:problem_eng} is concerned with finding feasible paths from all possible initial distributions, and our solution methodology proceeds by building a backward reachable tree of feasible paths from the goal distribution, and sequencing them together to find a feasible path at run-time for the query initial distribution, hence avoiding solving for the feasible path of the query distribution from scratch (see Section~\ref{sec:approach} for details).
\subsection{Convex Relaxation} \label{sec:convex_relax}
Following the development in \cite{liu2022optimal} \cite{rapakoulias2023discrete}, making the change of variables $ U_{k} = K_{k} \Sigma_{k} $ and introducing an auxiliary variable $ Y_{k} $, we can relax (\ref{eq:J_simp}) to a convex semidefinite program as,
\begin{align}
    \min_{ \Sigma_{k}, U_{k}, Y_{k}, \mu_{k}, v_{k} } J = \sum_{k=0}^{N-1} &\operatorname{tr}({Q_{k}\Sigma_{k}}) + \operatorname{tr}(R_{k} Y_{k}) \nonumber \\ &+ \mu^{\intercal}_{k} Q_{k} \mu_{k} + v_{k}^{\intercal} R_{k} v_{k}, \label{eq:J_relaxed}
\end{align}
such that for all $k = 0, 1, \cdots, N - 1$,
\begin{subequations}
\begin{align}
    C_{k} &\triangleq U_{k} \Sigma_{k}\inv U\t_{k} - Y_{k} \preceq 0 \tag{\ref{eq:J_relaxed}a} \label{eq:Ck}, \\
    G_{k} &\triangleq A\Sigma_{k}A\t + B U_{k} A\t + A U\t_{k} B\t \nonumber \\ &~~+ B Y_{k} B\t + D D\t - \Sigma_{k+1} = 0, \tag{\ref{eq:J_relaxed}b} \label{eq:Gk} \\
    &\text{subject to (\ref{eq:covar_init_nlp}) - (\ref{eq:mean_goal_nlp})}, \tag{\ref{eq:J_relaxed}c}
\end{align}
\end{subequations}
where the constraint (\ref{eq:Ck}) can be expressed as an LMI using the Schur complement as follows,
$\left[\begin{array}{ll}\Sigma_k & U_k^{\top} \\ U_k & Y_k\end{array}\right] \succeq 0.$
From (\ref{eq:Ck}), (\ref{eq:control_soft_simplified}) is further relaxed to,
\begin{align}
    \Phi\inv(1-\epsilon_{u}) \sqrt{ \alpha\t_{u} Y_{k} \alpha_{u} } + \alpha^{\intercal}_{u} v_{k} - \beta_{u} \leq 0. \label{eq:control_soft_relaxed}
\end{align}
Due to the presence of the square root, neither of (\ref{eq:state_soft_simplified}) and (\ref{eq:control_soft_relaxed}) are convex. Ref.~\cite{rapakoulias2023discrete} proposes a linearization of (\ref{eq:state_soft_simplified}) and (\ref{eq:control_soft_relaxed}), and since the square root is a concave function, the tangent line serves as the global overestimator,
\begin{equation}
    \sqrt{x} \leq \frac{1}{2 \sqrt{x_0}} x+\frac{\sqrt{x_0}}{2}, \quad \forall x, x_0>0
\end{equation}
Therefore, the constraints (\ref{eq:state_soft_simplified}) and (\ref{eq:control_soft_simplified}) are finally approximated as,
\begin{equation}
\begin{aligned}
&\begin{aligned}
\Phi^{-1}\left(1-\epsilon_x\right) & \frac{1}{2 \sqrt{\alpha_x^{\top} \Sigma_r \alpha_x}} \alpha_x^{\top} \Sigma_k \alpha_x+\alpha_x^{\top} \mu_k \\
& -\left(\beta_x-\Phi^{-1}\left(1-\epsilon_x\right) \frac{1}{2} \sqrt{\alpha_x^{\top} \Sigma_r \alpha_x}\right) \leq 0
\end{aligned}\\
&\begin{aligned}
\Phi^{-1}\left(1-\epsilon_u\right) & \frac{1}{2 \sqrt{\alpha_u^{\top} Y_r \alpha_u}} \alpha_u^{\top} Y_k \alpha_u+\alpha_u^{\top} v_k \\
& -\left(\beta_u-\Phi^{-1}\left(1-\epsilon_u\right) \frac{1}{2} \sqrt{\alpha_u^{\top} Y_r \alpha_u}\right) \leq 0
\end{aligned}
\end{aligned}
\end{equation}
where $\Sigma_{r}, Y_{r}$ are some reference values. The linearized constraints now form a convex set.

The finite horizon constrained covariance steering problem (\ref{eq:J}) is finally approximated as the following convex semidefinite program,
\begin{align}
    \min_{ \Sigma_{k}, U_{k}, Y_{k}, \mu_{k}, v_{k} } J \label{eq:J_final}
\end{align}
such that for all $ k = 0, 1, \cdots, N-1$,
\begin{subequations}
\begin{align}
    &\mu_{k+1} = A \mu_{k} + B v_{k}, \tag{\ref{eq:J_final}a} \label{eq:1} \\
    &C_{k}(\Sigma_{k}, U_{k}, Y_{k}) \preceq 0, \tag{\ref{eq:J_final}b} \label{eq:2} \\
    &G_{k} (\Sigma_{k+1}, \Sigma_{k}, Y_{k}, U_{k}) = 0, \tag{\ref{eq:J_final}c} \label{eq:3} \\
    &\ell\t\Sigma_{k}\ell + \alpha\t_{x}\mu_{k} - \overline{\beta}_{x} \leq 0, \tag{\ref{eq:J_final}d} \label{eq:4} \\
    &e\t Y_{k} e + \alpha\t_{u}v_{k} - \overline{\beta}_{u} \leq 0, \tag{\ref{eq:J_final}e} \label{eq:5}
\end{align}
\end{subequations}
where $J$ is defined in (\ref{eq:J_relaxed}).

\section{MAXCOVAR BRT: A MAXIMUM COVERAGE TREE FOR PROBABILISTIC PLANNING}\label{sec:approach}

We solve Problem~\ref{problem:problem_eng} by constructing a Backward Reachable Tree (BRT) of distributions that verifiably reach the goal distribution $\mathcal{G}$ under constraints on the control input. As discussed previously, in the presence of control constraints, existence of a control sequence that steers the system from an initial distribution to a goal distribution is established by solving a feasibility problem. The size of this feasibility check scales with the time-horizon and the BRT enables a faster feasibility check on a long time-horizon by checking the feasibility of reaching any existing node on the BRT instead of directly checking feasibility against the goal distribution.

We refer to this idea as \textit{recursive feasibility} since the branches of the tree can be thought of as carrying a certificate of feasibility along its edges from the goal in a backwards fashion s.t. guaranteeing feasibility to any of the children nodes in the tree guarantees feasibility to all the upstream parent nodes and consequently the root node that corresponds to the goal distribution.

We introduce a novel objective function MAX-COVAR, as discussed in the following subsection, for adding nodes and constructing edge controllers such that the resulting tree provides maximum \textit{coverage}. We also characterize formally the notion of coverage mathematically in this section.
\subsection{MAX-COVAR: Novel Objective for Construction of the Edge Controller} \label{sec:max_covar}
We define a procedure to construct an $N$-step edge controller as follows. The procedure $\operatorname{MAXCOVAR}$ takes in as input a candidate initial mean $\mu_{\mathrm{initial}}$, a target distribution at the end of the $N$-step steering maneuver $ ( \mu_{\mathrm{target}}, \Sigma_{\mathrm{target}} ) $ and computes an initial covariance $ \Sigma_{\mathrm{initial}} $ in a \textit{maximal} sense, henceforth referred to as $\Sigma_{\mathrm{initial, max}}$, and the associated control sequence $\mathscr{C} \coloneqq \{ \mathscr{C}_{k} \}_{k=0}^{N-1}$ that achieves the corresponding steering maneuver. The control at time $k$ is a tuple of the feedback and the feedforward term s.t.\ $\mathscr{C}_{k} = ( K_{k}, v_{k} )$.
\begin{align}
    \min_{ \Sigma_{k}, K_{k}, \mu_{k}, v_{k} } J_{\mathrm{MAX\textunderscore COVAR}} = -\lambda_{\mathrm{min}}(\Sigma_{0}) \tag{MAX-COVAR}\label{eq:J_max_covar}
\end{align}
such that for all $k = 0, 1, \cdots, N - 1$,
\begin{subequations}
\begin{align}
    &\begin{aligned}
        \Sigma_{k+1} = A &\Sigma_{k} A^{\intercal} + B K_{k}\Sigma_{k} A\t + A\Sigma_{k} K\t_{k} B\t \nonumber \\ &+ B K_{k} \Sigma_{k} K\t_{k} B\t + DD\t,
    \end{aligned} \\ 
    &\lambda_{\mathrm{max}}(\Sigma_{N}) \leq \lambda_{\mathrm{min}}(\Sigma_{\mathcal{G}}), \label{eq:covar_goal_maxcovar} \\
    &\mu_{k+1} = A \mu_{k} + B v_{k}, \label{eq:mean_prop_maxcovar} \\
    &\mu_{0} = \mu_{i},\ \ \mu_{N} = \mu_{\mathcal{G}} \label{eq:mean_maxcovar} \\
    &\Phi^{-1}(1-\epsilon_{x}) \sqrt{ \alpha^{\intercal}_{x} \Sigma_{k} \alpha_{x} } + \alpha^{\intercal}_{x} \mu_{k} - \beta_{x} \leq 0, \label{eq:state_constraint_maxcovar} \\
    &\Phi^{-1}(1-\epsilon_{u}) \sqrt{ \alpha\t_{u} K_{k} \Sigma_{k} K\t_{k} \alpha_{u} } + \alpha^{\intercal}_{u} v_{k} - \beta_{u} \leq 0, \label{eq:control_constraint_maxcovar}
\end{align}
\end{subequations}
where $ \lambda_{\mathrm{min}}(.) $ is the minimum eigenvalue operator. $\lambda_{\mathrm{min}}(A)$ is a concave function of the positive semidefinite matrix variable $A$, therefore \ref{eq:J_max_covar} is a convex minimization objective. The feasible region of the above optimization problem is non-convex and we use the similar lossless convexification as described in Section~\ref{sec:convex_relax} to tackle \ref{eq:J_max_covar}.

\textit{\textbf{Notation}}: We define a predicate $\operatorname{FEASIBLE} \coloneqq \operatorname{FEASIBLE}(q, p, N)$ that returns a boolean TRUE or FALSE if there exists a feasible $N$-step control sequence such that the system of equations (\ref{eq:covar_goal_maxcovar})-(\ref{eq:control_constraint_maxcovar}) defined for $(\mu_{\mathcal{I}}, \Sigma_{\mathcal{I}}) = (\mu_{q}, \Sigma_{q})$, and $(\mu_{\mathcal{G}}, \Sigma_{\mathcal{G}}) = (\mu_{p}, \Sigma_{p})$ is feasible. Also, we use the notation $ q \xrightarrow[N]{\mathscr{C}} p$ to denote that the mean and covariance dynamics initialized at $(\mu_{q}, \Sigma_{q})$ and driven by the N-step control sequence $\mathscr{C}$ satisfy the state and control chance constraints (\ref{eq:state_constraint_maxcovar})-(\ref{eq:control_constraint_maxcovar}) at all time-steps and the terminal goal reaching constraint (\ref{eq:covar_goal_maxcovar}) corresponding to $(\mu_{\mathcal{G}}, \Sigma_{\mathcal{G}}) = (\mu_{p}, \Sigma_{p})$.

\begin{remark}\label{remark:reuse}
(Reuse) Let $\mathscr{C}$ be a $N$-step control sequence s.t.\ $ \mathcal{I} \xrightarrow[N]{\mathscr{C}} \mathcal{G} $, then it follows that $ \mathcal{I}^{-} \xrightarrow[N]{\mathscr{C}} \mathcal{G} $ for all $ \mathcal{I}^{-}$ s.t.\  $\mu_{\mathcal{I}^{-}} = \mu_{\mathcal{I}}$, and $\Sigma_{\mathcal{I}^{-}} \preceq \Sigma_{\mathcal{I}}$. In other words, a control sequence computed for the steering maneuver from $\mathcal{I}$ to $\mathcal{G}$ remains a feasible maneuver from $\mathcal{I}^{-}$ to $\mathcal{G}$ and thus could be reused.
\end{remark}
The above directly follows from an analysis of equations (\ref{eq:covar_goal_maxcovar})-(\ref{eq:control_constraint_maxcovar}) and we omit a formal proof.

\textbf{\textit{Rationale behind}} $\operatorname{MAX-COVAR}$: \ref{eq:J_max_covar} is based on the maximization of the minimum eigenvalue of $\Sigma_{\mathrm{initial}}$ for a given initial mean, and a desired target distribution. It is based on the observation that if $ \mathcal{I} \xrightarrow{\mathscr{C}} \mathcal{G} $, then $ \mathcal{I}' \xrightarrow{\mathscr{C}} \mathcal{G} \ \ \forall \ \ \mathcal{I}' \text{ s.t. } \mu_{\mathcal{I}} = \mu_{\mathcal{I}'} \text{ and } \Sigma_{\mathcal{I}'} \prec \Sigma_{\mathcal{I}} $. In other words, if the system trajectory initialized at $ \mathcal{I} $ respects all the constraints and reaches the target distribution $\mathcal{G}$ under control sequence $ \mathscr{C} $, then for the system initialized at $ \mathcal{I}' $ such that $ \mu_{\mathcal{I}'} = \mu_{\mathcal{I}}$ and $ \Sigma_{\mathcal{I}'} \prec \mathcal{I} $, the system satisfies all the constraints under the same control sequence $\mathscr{C}$. 

Therefore, we aim to find a $ (\Sigma_{\mathrm{initial}}, \mathscr{C}) $ such that the computed $\mathscr{C}$ could be reused across largest possible number of initial distributions, i.e., find $ \mathcal{I} $ such that $ \{ \mathcal{I}' \ \vert \ \mathcal{I}' \prec \mathcal{I}  \} $ is the largest. This leads to the maximization of the minimum eigenvalue of the initial covariance as a natural objective function for our search.

\textbf{\textit{Significance of}} MAX-COVAR: $\operatorname{MAX-COVAR}$ provides a certificate of reachability for any goal distribution in terms of the maximum permissible value of the minimum eigenvalue of the covariance at any query mean for which there exists a feasible control sequence that can achieve the corresponding steering maneuver under control constraints. For instance, consider a goal distribution $ (\mu_{\mathcal{G}}, \Sigma_{\mathcal{G}}) $ and a query mean $\mu_{q}$, and let $ \Sigma_{q,\mathrm{max}}$, $ \mathscr{C}_{q, \mathrm{max}} $ be such that,
\begin{align*}
    \Sigma_{q, \mathrm{max}}, \mathscr{C}_{q, \mathrm{max}} \longleftarrow \operatorname{MAXCOVAR}( \mu_{q}, (\mu_{\mathcal{G}, \Sigma_{\mathcal{G}}}), N).
\end{align*}
It follows from the above that $\forall \ \Sigma \succ 0$ s.t.\ $ \lambda_{\mathrm{min}}(\Sigma) > \lambda_{\mathrm{min}}(\Sigma_{q,\mathrm{max}})$, $ \nexists \ \mathscr{C}$ s.t.\ $ (\mu_{q}, \Sigma) \xrightarrow[N]{\mathscr{C}} \mathcal{G}$ by definition of MAX-COVAR otherwise $ \lambda_{\mathrm{min}} (\Sigma_{q, \mathrm{max}}) $ is not the maximum possible minimum eigenvalue of the initial covariance for the existence of a feasible path and we arrive at a contradiction. On the other hand, all matrices $ \Sigma \succ 0 $ such that their minimum eigenvalue is less than the minimum eigenvalue of the covariance matrix computed in the maximal sense i.e.\ $ \lambda_{\mathrm{min}}(\Sigma) < \lambda_{\mathrm{min}}(\Sigma_{q, \mathrm{max}}) $ have lesser \textit{coverage} than the maximal covariance matrix, a fact that is formalized later in Lemma~\ref{lemma:sigma_brs_lemma}. 

\subsection{Construction of the MAXCOVAR BRT}\label{sec:construction_brt}
The algorithm proceeds by building a tree $\mathcal{T(\mathcal{G})}$ represented through a set of nodes $\{\mathcal{\nu}_{i}\}$, and a set of edge controllers $\{\varepsilon_{i,j}\}$. Each node $i$, $ \nu_{i} $, is a tuple $ \left( \mu_{i}, \Sigma_{i}, p_{i}, \mathscr{C}_{i}, \mathrm{ch}_{i} \right) $ where $ \mu_{i} , \Sigma_{i}$ are the mean and the covariance of the distribution stored in the node, $p_{i}$ is a pointer to the parent node, $ \mathscr{C}_{i} \coloneqq \{ K^{i, p_{i}}_{t}, v^{i, p_{i}}_{t} \}_{t=0}^{N-1} $ is the $N$-step control sequence stored at the node that steers the state distribution from $ (\mu_{i}, \Sigma_{i}) $ to $ (\mu_{p_{i}}, \Sigma_{p_{i}}) $, and $ \mathrm{ch}_{i} $ is the list of pointers of all the children node of node $i$ in the tree $\mathcal{T}(\mathcal{G})$.

$ \varepsilon $ is another data structure that stores all the edge information for the tree $\mathcal{T}(\mathcal{G})$, such that, $ \varepsilon_{i,j} \coloneqq \{ K^{i,j}_{t}, v^{i,j}_{t} \}_{t=0}^{N-1} $ stores the $N$-step control sequence that steers the state distribution from node $i$ to node $j$ if such an edge exists, and is empty otherwise.

\begin{algorithm}[t]
\caption{Constructing the MAXCOVAR BRT}
\label{alg:construct_brt}
\begin{algorithmic}[1]
    \Require $\mathcal{G}$, $N$, $n_\mathrm{iter}$ 
    \Ensure  $\mathcal{T}$ 
    \State $\nu \gets \phi$, $\varepsilon \gets \phi$
    \State $ \nu_{0} \gets \operatorname{CREATENODE}(\mu_{\mathcal{G}}, \Sigma_{\mathcal{G}}, \operatorname{NONE}, \operatorname{NONE}, \{\})$
    \State $\nu \gets \nu \cup \{\nu_{0}\}$
    \For{$i \gets 1 \textrm{ to } n_\mathrm{iter}$}
        \State 
        $\nu_{k} \gets \operatorname{RAND}(\nu)$
        \State $ \mu_{q} \gets \operatorname{RANDMEANAROUND}(\nu_{k}, r_{\textrm{sample}}) $
        \State $ \textrm{status}, \Sigma_{\mathrm{max}}, \mathscr{C}_{q} \gets \operatorname{MAXCOVAR}(\mu_{q}, (\mu_{k}, \Sigma_{k} ), N)$
        \If{$\textrm{status} \neq \textrm{infeasible}$}
            \State $ \textrm{idx} \gets \operatorname{size}(\mathcal{\nu}) + 1 $
            \State $ \mathrm{ch}_{\mathrm{idx}} \gets \{\}$
            \State $ \nu_{\mathrm{new}} \gets \operatorname{CREATENODE}(\mu_{q}, \Sigma_{\mathrm{max}}, \textrm{idx}, k, \mathrm{ch}_{\mathrm{idx}})$
            \State $ \mathrm{ch}_{k} \gets \mathrm{ch}_{k} \bigcup \{ \textrm{idx}\} $
            \State $ \varepsilon_{\textrm{idx},k} \gets \mathscr{C}_{q} $
            \State $ \varepsilon \gets \varepsilon \bigcup \{ \varepsilon_{\textrm{idx},k} \} $
        \EndIf
    \EndFor
    \State $  \mathcal{T} \gets \{ \nu, \varepsilon \} $
    \State $ \textrm{\textbf{Return}} \ \  \mathcal{T}$
\end{algorithmic}
\end{algorithm}
Now, we discuss the essential sub-routines of the above algorithm.
\subsubsection{Node selection}
The tree is grown in the spirit of finding paths from all query initial distributions for which paths would exist to the goal distribution. In our implementation, the nodes are selected randomly according to the Voronoi bias (of the first order moment of the nodes) to bias population of the $1-$BRS of the node distributions whose corresponding Voronoi regions are relatively unexplored in the sense of the first order moment of the distributions.

\subsubsection{Node expansion}
Once a node to expand has been selected, a query mean is sampled from a neighbourhood of some radius around it and a connection is attempted through the $\operatorname{MAXCOVAR}$ method for edge construction. Let's say the $k$th node on the tree containing the distribution $ (\mu^{(k)}, \Sigma^{(k)}) $ has been selected to expand, and let $ \mu_{q} $ be the query mean sampled from a neighbourhood around $ \mu^{(k)} $ through the $\operatorname{RANDMEANAROUND}(., r)$ module where $r$ is some sampling radius. We solve the following optimization to construct the edge,
\begin{align}
    \Sigma_{\mathrm{max}}, \mathscr{C}_{\mathrm{max}} \longleftarrow \operatorname{MAXCOVAR}( \mu_{q}, (\mu^{(k)}, \Sigma^{(k)}), N)
\end{align}
$ (\mu_{q}, \Sigma_{\mathrm{max}}) $ is added as a node to the tree with the edge controller $\mathscr{C}_{\mathrm{max}}$ and $( \mu^{(k)}, \Sigma^{(k)} )$ as the parent if the status of the above optimization problem (as returned by the solver) is not infeasible.

\textit{\textbf{Definitions}}: We now define the following mathematical objects that will aid the analysis and further discussion of our proposed approach. The $h$ hop backward reachable set of distributions for a distribution $p$, h-$\operatorname{BRS}(p)$ is defined as follows,
\begin{align}\label{eq:hbrs_def}
    h-\operatorname{BRS}(p) =  \{ (\mu_{q}, \Sigma_{q}) \vert \operatorname{FEASIBLE}(q, p, hN) = \operatorname{TRUE} \}.
\end{align}
We also define the $h$-$\operatorname{BRS}$ of a tree of distributions $\mathcal{T}$ as,
    \begin{align}
        \text{h-}\operatorname{BRS}(\mathcal{T}) = \bigcup_{i \in \nu(\mathcal{T})} \text{($h - d_{i}$)-}\operatorname{BRS}(\nu_{i}), \label{eq:tree_brs_def}
    \end{align}
    where $\nu(\mathcal{T})$ is the set of all vertices in the tree, and $d_{i} $ is the distance of the $i$-th node from the root node in terms of the number of hops.

\textbf{\textit{Concatenation of control sequences}}: A concatenation $\mathscr{C}_{A,B}$ of two control sequences $ \mathscr{C}_{A} $, $\mathscr{C}_{B}$ of lengths $N_{A}, N_{B}$ respectively is a control sequence of length $N_{A} + N_{B}$ represented through the $\bigcup$ operator i.e.\ $\mathscr{C}_{A,B} \coloneqq \mathscr{C}_{A} \bigcup \mathscr{C}_{B}$, s.t.\ $\mathscr{C}_{A,B}(t) \coloneqq C_{A}(t) \ \forall \ t = 0, 1, \cdots N_{A} - 1$, and $\mathscr{C}_{A,B}(t) \coloneqq C_{B}(t-N_{A}) \ \forall \ t = N_{A}, N_{A} + 1, \cdots N_{A} + N_{B} - 1$. Note that the concatenation operator $\bigcup$ is \textit{non-commutative} in the two argument control sequences, i.e.\ $ \mathscr{C}_{A} \bigcup \mathscr{C}_{B} \neq \mathscr{C}_{B} \bigcup \mathscr{C}_{A} $.

To establish / guarantee feasibility of the query distribution to the goal distribution, it is sufficient to guarantee feasibility to any distribution that is already verified to reach the goal. This idea can be seen in the following lemma on sequential composition of control sequences ensuring satisfaction of state and control chance constraints along the overall concatenated trajectory,
\begin{lemma}\label{lemma:sequential_composition}(Feasibility through Sequential Composition) Let $ (\mu_{i}, \Sigma_{i}), (\mu_{j}, \Sigma_{j}), (\mu_{k}, \Sigma_{k}), \mathscr{C}_{i,j}, \mathscr{C}_{j,k}, h_{i,j}, h_{j,k}$ be such that \ $ (\mu_{i}, \Sigma_{i}) \xrightarrow[h_{i,j} N]{\mathscr{C}_{i,j}} (\mu_{j}, \Sigma_{j}) $, and $ (\mu_{j}, \Sigma_{j}) \xrightarrow[h_{j, k} N]{\mathscr{C}_{j, k}} (\mu_{k}, \Sigma_{k}) $. It follows that $ (\mu_{i}, \Sigma_{i}) \xrightarrow[(h_{i,j} + h_{j, k}) N]{\mathscr{C}_{i,k}} (\mu_{k}, \Sigma_{k})$ where $ \mathscr{C}_{i,k} \coloneqq \mathscr{C}_{i,j} \bigcup \mathscr{C}_{j,k} $.
\end{lemma}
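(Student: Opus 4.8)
The plan is to prove Lemma~\ref{lemma:sequential_composition} directly by unfolding the definition of the arrow notation $\xrightarrow[hN]{\mathscr{C}}$, which packages together three requirements: the state chance constraints (\ref{eq:state_constraint_maxcovar}) hold at every step, the control chance constraints (\ref{eq:control_constraint_maxcovar}) hold at every step, and the terminal goal-reaching constraint (\ref{eq:covar_goal_maxcovar}) holds at the final step. The key observation is that the closed-loop mean and covariance trajectories are \emph{deterministic} functions of the initial distribution and the control sequence via the recursions (\ref{eq:mean_prop_maxcovar}) and the covariance propagation equation; once we fix $(\mu_i,\Sigma_i)$ and the concatenated sequence $\mathscr{C}_{i,k} = \mathscr{C}_{i,j}\bigcup\mathscr{C}_{j,k}$, the generated trajectory is pinned down.

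First I would establish the \emph{matching} or \emph{hand-off} property: run the mean/covariance recursions from $(\mu_i,\Sigma_i)$ under $\mathscr{C}_{i,k}$. Because $\mathscr{C}_{i,k}(t) = \mathscr{C}_{i,j}(t)$ for $t = 0,\dots, h_{i,j}N - 1$, the trajectory over the first $h_{i,j}N$ steps coincides exactly with the trajectory generated by $\mathscr{C}_{i,j}$ from $(\mu_i,\Sigma_i)$. The hypothesis $(\mu_i,\Sigma_i)\xrightarrow[h_{i,j}N]{\mathscr{C}_{i,j}}(\mu_j,\Sigma_j)$ tells us this sub-trajectory satisfies all state/control constraints on $t = 0,\dots,h_{i,j}N-1$ and, crucially via (\ref{eq:covar_goal_maxcovar}) applied with $(\mu_{\mathcal G},\Sigma_{\mathcal G}) = (\mu_j,\Sigma_j)$, that the state at step $h_{i,j}N$ has mean exactly $\mu_j$ and covariance $\Sigma'_j$ with $\lambda_{\max}(\Sigma'_j)\le \lambda_{\min}(\Sigma_j)$, i.e.\ $\Sigma'_j \preceq \Sigma_j$. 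So the concatenated trajectory arrives at step $h_{i,j}N$ at a distribution $(\mu_j, \Sigma'_j)$ with $\Sigma'_j \preceq \Sigma_j$ — not necessarily equal to $(\mu_j,\Sigma_j)$, which is the subtlety.

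Next I would propagate the remaining $h_{j,k}N$ steps under $\mathscr{C}_{j,k}$, but starting from $(\mu_j,\Sigma'_j)$ rather than $(\mu_j,\Sigma_j)$. Here I invoke Remark~\ref{remark:reuse} (Reuse): since $\mathscr{C}_{j,k}$ is feasible from $(\mu_j,\Sigma_j)$ and $\Sigma'_j \preceq \Sigma_j$ with the same mean, $\mathscr{C}_{j,k}$ is also feasible from $(\mu_j,\Sigma'_j)$, i.e.\ $(\mu_j,\Sigma'_j)\xrightarrow[h_{j,k}N]{\mathscr{C}_{j,k}}(\mu_k,\Sigma_k)$. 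Concretely one checks that the covariance recursion is monotone — if $\Sigma'_j \preceq \Sigma_j$ then the one-step map $\Sigma\mapsto (A+BK)\Sigma(A+BK)^\top + DD^\top$ preserves the Loewner order, so the entire downstream covariance trajectory from $\Sigma'_j$ is $\preceq$ the one from $\Sigma_j$ at every step; this makes the state constraint (\ref{eq:state_constraint_maxcovar}) and control constraint (\ref{eq:control_constraint_maxcovar}) continue to hold (smaller $\alpha^\top\Sigma\alpha$ and $\alpha^\top K\Sigma K^\top \alpha$), while the mean trajectory is unchanged (independent of $\Sigma$), so (\ref{eq:mean_maxcovar}) is untouched; finally the terminal covariance at step $(h_{i,j}+h_{j,k})N$ is $\preceq \Sigma_k$, hence $\lambda_{\max} \le \lambda_{\min}(\Sigma_k)$ is inherited, giving (\ref{eq:covar_goal_maxcovar}) for the target $(\mu_k,\Sigma_k)$. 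Stitching the two pieces, all constraints hold on $t = 0,\dots,(h_{i,j}+h_{j,k})N - 1$ and the terminal constraint holds, which is exactly the assertion $(\mu_i,\Sigma_i)\xrightarrow[(h_{i,j}+h_{j,k})N]{\mathscr{C}_{i,k}}(\mu_k,\Sigma_k)$.

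The main obstacle — and the reason this is not a one-line "just concatenate" argument — is precisely the inequality $\Sigma'_j \preceq \Sigma_j$ rather than equality at the hand-off point: the goal-reaching constraint (\ref{eq:covar_goal_maxcovar}) is a relaxed ($\preceq$-type, eigenvalue) constraint, so the composition argument must carry the monotonicity of the covariance recursion through all subsequent steps, and it relies essentially on the Reuse remark holding in the stronger sense that feasibility is preserved not just at the start but propagates. I would make sure to state the one-step Loewner-monotonicity of the covariance update as an explicit sub-claim (it follows from $M\Sigma_1 M^\top \preceq M\Sigma_2 M^\top$ whenever $\Sigma_1\preceq\Sigma_2$, with $M = A + BK_t$) and then induct over the $h_{j,k}N$ downstream steps; everything else is bookkeeping on indices via the non-commutative concatenation definition.
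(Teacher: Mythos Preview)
Your proposal is correct and follows essentially the same approach as the paper's proof: split the concatenated trajectory at the hand-off time $h_{i,j}N$, use the hypothesis on $\mathscr{C}_{i,j}$ for the first segment, observe that the hand-off covariance satisfies $\Sigma'_j \preceq \Sigma_j$ via (\ref{eq:covar_goal_maxcovar}), and then invoke Remark~\ref{remark:reuse} for the second segment. You actually go further than the paper by sketching the Loewner-monotonicity argument underlying Remark~\ref{remark:reuse} (which the paper simply asserts); just be careful with one phrase---``$\preceq \Sigma_k$, hence $\lambda_{\max}\le\lambda_{\min}(\Sigma_k)$'' is not a valid implication on its own, and what you really want is $\lambda_{\max}(\tilde\Sigma)\le\lambda_{\max}(\hat\Sigma)\le\lambda_{\min}(\Sigma_k)$ via the monotonicity chain you already set up.
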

\begin{proof}
    We want to prove that the system initialized at the distribution $( \mu_{i}, \Sigma_{i} )$ and driven by the control sequence $ \mathscr{C}_{i,k} $ which is obtained from the concatenation of the two control sequences $ \mathscr{C}_{i,j} $, and $ \mathscr{C}_{j,k}$ i.e.\ $ \mathscr{C}_{i,k} = \mathscr{C}_{i,j} \bigcup \mathscr{C}_{j,k} $ satisfies all the state and control chance constraints of the form (\ref{eq:state_constraint_maxcovar})-(\ref{eq:control_constraint_maxcovar}) for a trajectory of length $ (h_{i,j} + h_{j,k}) N $, and the terminal goal reaching constraints of the form (\ref{eq:mean_maxcovar}) and (\ref{eq:covar_goal_maxcovar}) for the goal distribution $( \mu_{k}, \Sigma_{k})$.
    
    The mean and covariance dynamics at any time $t$ as a function of the intial distribution and the control sequence are represented as $ \mu_{t}(\mu_{i}, \mathscr{C}_{i,k})$ and  $ \Sigma_{t}(\Sigma_{i}, \mathscr{C}_{i,k})$ respectively.
    
    For $t = 0, 1, \cdots h_{i,j} N - 1$, $ \mu_{t}(\mu_{i}, \mathscr{C}_{i,k}) = \mu_{t}(\mu_{i}, \mathscr{C}_{i,j}) $, and $ \Sigma_{t}(\Sigma_{i}, \mathscr{C}_{i,k}) = \Sigma_{t}(\Sigma_{i}, \mathscr{C}_{i,j}) $. Since $ \mathscr{C}_{i,j} $ is a feasible control sequence for the $(i,j)$ maneuver s.t.\ $ (\mu_{i}, \Sigma_{i}) \xrightarrow[N_{i,j}]{\mathscr{C}_{i,j}} (\mu_{j}, \Sigma_{j}) $, all the state and control chance constraints are satisfied by $ \mu_{t}( \mu_{i}, \mathscr{C}_{i,j}) $ and $\Sigma_{t}( \Sigma_{i}, \mathscr{C}_{i,j}) $ for $t = 0, 1, \cdots N_{i,j} - 1$, and $ \Sigma_{N_{i,j}} (\Sigma_{i}, \mathscr{C}_{i,k})$ is s.t.\ $ \lambda_{\mathrm{max}}( \Sigma_{N_{i,j}} (\Sigma_{i}, \mathscr{C}_{i,k}) ) \leq \lambda_{\mathrm{min}}( \Sigma_{j} )$ which implies $ \Sigma_{N_{i,j}} (\Sigma_{i}, \mathscr{C}_{i,k}) \preceq \Sigma_{j} $.

    The rest of the maneuver for $t = N_{i,j}, N_{i,j} + 1, \cdots N_{i,j} + N_{j,k} - 1$ could be thought of as an $N_{j,k}$ step maneuver initialized at $ \mu_{N_{i,j}} $ and $ \Sigma_{N_{i,j}} $. $ \mathscr{C}_{j,k} $ is s.t.\ $ (\mu_{j}, \Sigma_{j}) \xrightarrow[N_{j, k}]{\mathscr{C}_{j, k}} (\mu_{k}, \Sigma_{k}) $, therefore from Remark~\ref{remark:reuse}, $ (\mu_{j}, \Sigma_{N_{i,j}} (\Sigma_{i}, \mathscr{C}_{i,k}) ) \xrightarrow[N_{j, k}]{\mathscr{C}_{j, k}} (\mu_{k}, \Sigma_{k}) $ since $\Sigma_{N_{i,j}} (\Sigma_{i}, \mathscr{C}_{i,k}) \preceq \Sigma_{j}$.
\end{proof}
\subsection{Planning through the BRT} \label{sec:planning}
In this section, we discuss our approach to find feasible paths to the goal through a BRT.

\textit{\textbf{Finding a feasible path}}: To find a feasible path to the goal for a query distribution $q \coloneqq (\mu_{q}, \Sigma_{q})$, single hop connections are attempted one-by-one to $M$ nearest nodes on the BRT for some hyperparameter $M$. For a candidate node $\nu_{k}$ on the BRT, the following problem is solved,
\begin{align}
    \mathscr{C}_{q} \longleftarrow \operatorname{OPT-STEER}( (\mu_{q}, \Sigma_{q}), (\mu^{(k)}, \Sigma^{(k)}), N).
\end{align}
The search for a feasible path terminates once a connection has successfully been established to one of the existing nodes on the BRT, and is given by a concatenation of the above computed control sequence $\mathscr{C}_{q}$ with the pre-computed controllers stored in the sequence of edges of the tree from the $k$th node to the root node. Let $ \nu_{k} $ be a distance of $d_{k}$ hops away from the goal s.t.\ $ \mathrm{idx}_{0}, \mathrm{idx}_{1}, \cdots, \mathrm{idx}_{d_{k}}$ be the sequence of nodes encountered from the $k$th node to the root node where $\mathrm{idx}_{0} = k$ and \ $ \mathrm{idx}_{d_{k}} = 0 $. Thererfore, the feasible path from $q$ to the goal $\mathcal{G}$ is obtained as, $ \mathscr{C}_{q, \mathcal{G}} = \mathscr{C}_{q} \bigcup \mathscr{C}_{ \mathrm{idx}_{0}, \mathrm{idx}_{1} } \bigcup \mathscr{C}_{ \mathrm{idx}_{1}, \mathrm{idx}_{2} } \cdots \bigcup \mathscr{C}_{ \mathrm{idx}_{d_{k} - 1}, \mathrm{idx}_{d_{k}} } $ s.t.\ $ q \xrightarrow[(d_{k} + 1)N]{\mathscr{C}_{q, \mathcal{G}}} \mathcal{G}$, which follows from Lemma~\ref{lemma:sequential_composition}.

\textit{\textbf{Implication of recursive feasibility on the speed-up in computing a feasible path}}: Say that the query distribution $q$ is such that $q \in \text{t}-\operatorname{BRS}(\mathcal{G})$ and $q \notin \text{t'}-\operatorname{BRS}(\mathcal{G}) \ \forall \ t' < t$, i.e.\ a path from $q$ to the goal shorter than $t$-hops does not exist. Therefore, to compute a feasible path that steers the system from $q$ to $\mathcal{G}$ without reusing any of the pre-computed controllers from the BRT, a feasibility instance $\operatorname{FEASIBLE}(q, \mathcal{G}, tN)$ of size $tN$ needs to be solved. Alternatively, if the search for a feasible path is carried through attempting connections to the BRT, the expenditure on compute is that of solving a feasibility instance of size $N$ a maximum $M$ number of times resulting in an order of magnitude savings in computation. More details about the empirical experiments and results can be found in Section~\ref{sec:experiments}.

\textit{\textbf{Maximum Coverage of the} MAXCOVAR BRT}: As discussed earlier, re-use of controllers stored along the edges of a BRT can lead to significant speedup in the computation of a feasible path to the goal. Therefore, it is a desirable property for a BRT to be such that paths from as large a number of query initial distributions as possible can be found to the tree. This property is characterized in terms of the \textit{coverage} of the BRT, and for any given BRT $\mathcal{T}$, its coverage is quantified through the set of all distributions that can reach the tree in $h$ hops i.e.
\begin{align}\label{eq:cover_tree_def}
    \operatorname{Cover}(\mathcal{T}) \coloneqq \text{h}-\operatorname{BRS}(\mathcal{T}).
\end{align}
We now show that the BRT constructed through the novel objective function for edge construction defined in Section~\ref{sec:max_covar} henceforth referred to as MAXCOVAR BRT discovers feasible paths from provably the largest possible set of query initial distributions compared to any other procedure of edge construction. This is formalized in Theorem~\ref{theorem:coverage} below. We proceed by first proving Lemma~\ref{lemma:sigma_brs_lemma} that talks about the coverage of a single node and is used as a building block for Theorem~\ref{theorem:coverage} that concerns the coverage of a tree (multiple nodes).

\begin{lemma}\label{lemma:sigma_brs_lemma}
    For $\Sigma_{\mathrm{max}}, \Sigma^{-}_{\mathrm{max}} \succ 0$, s.t.\ $\lambda_{\mathrm{min}}( \Sigma_{\mathrm{max}} ) > \lambda_{\mathrm{min}}( \Sigma^{-}_{\mathrm{max}} )$, $ h\text{-}\operatorname{BRS}( \mu, \Sigma_{\mathrm{max}} ) \supseteq h\text{-}\operatorname{BRS}( \mu, \Sigma^{-}_{\mathrm{max}} ) \ \forall \ h \geq 1 $ for all planning scenes. Also, there exist planning scenes $ \{ (\alpha_{x}, \beta_{x}, \epsilon_{x}), (\alpha_{u}, \beta_{u}, \epsilon_{u}) \} $ s.t.\ $ h\text{-}\operatorname{BRS}( \mu, \Sigma_{\mathrm{max}} ) \supset h\text{-}\operatorname{BRS}( \mu, \Sigma^{-}_{\mathrm{max}} ) \ \forall \ h \geq 1$.
\end{lemma}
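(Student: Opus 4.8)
The plan is to split the statement into its two halves: the set inclusion $h\text{-}\operatorname{BRS}(\mu,\Sigma_{\mathrm{max}}) \supseteq h\text{-}\operatorname{BRS}(\mu,\Sigma^{-}_{\mathrm{max}})$, which I would argue holds for every planning scene by a pure monotonicity observation, and the strictness claim, for which I would exhibit one explicit scene in which the inclusion is proper for all $h\geq 1$.

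For the inclusion, I would unwind $q\in h\text{-}\operatorname{BRS}(\mu,\Sigma^{-}_{\mathrm{max}})$: there is an $hN$-step control sequence $\mathscr{C}$ with $q\xrightarrow[hN]{\mathscr{C}}(\mu,\Sigma^{-}_{\mathrm{max}})$, i.e.\ the mean/covariance trajectory out of $(\mu_q,\Sigma_q)$ under $\mathscr{C}$ satisfies the state and control chance constraints (\ref{eq:state_constraint_maxcovar})-(\ref{eq:control_constraint_maxcovar}) at every step, the terminal mean condition $\mu_{hN}=\mu$, and $\lambda_{\mathrm{max}}(\Sigma_{hN})\leq\lambda_{\mathrm{min}}(\Sigma^{-}_{\mathrm{max}})$. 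The only one of these that refers to the goal covariance is the last, and it is monotone in $\lambda_{\mathrm{min}}(\Sigma_{\mathcal{G}})$: since $\lambda_{\mathrm{min}}(\Sigma_{\mathrm{max}})>\lambda_{\mathrm{min}}(\Sigma^{-}_{\mathrm{max}})\geq\lambda_{\mathrm{max}}(\Sigma_{hN})$, the very same $\mathscr{C}$ also certifies $q\xrightarrow[hN]{\mathscr{C}}(\mu,\Sigma_{\mathrm{max}})$, so $q\in h\text{-}\operatorname{BRS}(\mu,\Sigma_{\mathrm{max}})$. Nothing here uses the specific $A,B,D$ or constraint data, so the inclusion is universal.

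For the strict containment I would construct a scene in the ambient dimension $n$ of the two matrices: $A=B=I_n$, $D=0$ (a noiseless instance, or any sufficiently small $D$ by continuity), a single control half-space with $\alpha_u$ a unit vector, $\beta_u=0$, $\epsilon_u\in(0,0.5)$ so that $\Phi^{-1}(1-\epsilon_u)>0$, and $\beta_x$ taken large enough that the state chance constraint (\ref{eq:state_constraint_maxcovar}) is slack on the distributions below. Put $c:=\lambda_{\mathrm{min}}(\Sigma_{\mathrm{max}})$ and take the candidate $q=(\mu,cI_n)$. On one hand $q\in h\text{-}\operatorname{BRS}(\mu,\Sigma_{\mathrm{max}})$ for every $h$: the zero control sequence keeps $\mu_k\equiv\mu$ and, since $D=0$, $\Sigma_k\equiv cI_n$, so $\lambda_{\mathrm{max}}(\Sigma_{hN})=c=\lambda_{\mathrm{min}}(\Sigma_{\mathrm{max}})$, the terminal conditions (\ref{eq:covar_goal_maxcovar}), (\ref{eq:mean_maxcovar}) hold, and the chance constraints hold trivially (with $K_k=0,v_k=0$ the control constraint reads $0\leq\beta_u=0$). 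On the other hand $q\notin h\text{-}\operatorname{BRS}(\mu,\Sigma^{-}_{\mathrm{max}})$ for every $h$: suppose $\mathscr{C}=\{(K_k,v_k)\}_{k=0}^{hN-1}$ achieved $q\xrightarrow[hN]{\mathscr{C}}(\mu,\Sigma^{-}_{\mathrm{max}})$. From $\mu_{k+1}=\mu_k+v_k$ and $\mu_0=\mu_{hN}=\mu$ we get $\sum_k v_k=0$; the control constraint (\ref{eq:control_constraint_maxcovar}) with $\beta_u=0$ forces $\alpha_u^{\intercal}v_k\leq-\Phi^{-1}(1-\epsilon_u)\sqrt{\alpha_u^{\intercal}K_k\Sigma_k K_k^{\intercal}\alpha_u}\leq 0$ for each $k$, so summing to zero forces $\alpha_u^{\intercal}v_k=0$ and hence $\alpha_u^{\intercal}K_k\Sigma_k K_k^{\intercal}\alpha_u=0$, i.e.\ $\Sigma_k^{1/2}K_k^{\intercal}\alpha_u=0$ (so also $\Sigma_k K_k^{\intercal}\alpha_u=0$). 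Feeding this into the covariance recursion $\Sigma_{k+1}=(I+K_k)\Sigma_k(I+K_k)^{\intercal}$ gives $\alpha_u^{\intercal}\Sigma_{k+1}\alpha_u=\alpha_u^{\intercal}\Sigma_k\alpha_u+2\alpha_u^{\intercal}\Sigma_k K_k^{\intercal}\alpha_u+\alpha_u^{\intercal}K_k\Sigma_k K_k^{\intercal}\alpha_u=\alpha_u^{\intercal}\Sigma_k\alpha_u$, so by induction $\alpha_u^{\intercal}\Sigma_{hN}\alpha_u=\alpha_u^{\intercal}(cI_n)\alpha_u=c$, whence $\lambda_{\mathrm{max}}(\Sigma_{hN})\geq c=\lambda_{\mathrm{min}}(\Sigma_{\mathrm{max}})>\lambda_{\mathrm{min}}(\Sigma^{-}_{\mathrm{max}})$, contradicting the terminal covariance constraint (\ref{eq:covar_goal_maxcovar}). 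Hence the containment is strict in this scene, for all $h\geq1$.

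I expect Part 1 to be routine and Part 2 to be where the real content lies. The delicate point there is showing that a single control-input half-space, combined with the requirement that the mean return to $\mu$, truly prevents the $\alpha_u$-direction of the covariance from ever shrinking; this needs the short algebraic identity above and careful handling of the feedforward terms $v_k$ (and the fact that $\Sigma_k$ need not stay positive definite a priori, which is why I work directly with $\Sigma_k^{1/2}K_k^{\intercal}\alpha_u=0$ rather than with $K_k^{\intercal}\alpha_u=0$). The choice $A=B=I_n$, $D=0$ is what makes that direction of the covariance exactly invariant and makes the argument uniform in $h$; a genuine $D\neq 0$ or a non-identity $A$ would only make the small terminal covariance harder to reach, so this scene is essentially the extremal one.
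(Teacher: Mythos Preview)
Your forward inclusion is exactly the paper's argument: the only goal-covariance-dependent constraint is the terminal one \eqref{eq:covar_goal_maxcovar}, and it is monotone in $\lambda_{\min}(\Sigma_{\mathcal G})$, so the same control sequence certifies membership in both sets.

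The strictness half is where there is a genuine gap. In the paper's framework a \emph{planning scene} is precisely the constraint tuple $\{(\alpha_x,\beta_x,\epsilon_x),(\alpha_u,\beta_u,\epsilon_u)\}$; the system data $A,B,D$ are fixed once in Section~\ref{sec:problem_statement} and are not yours to choose. Your construction sets $A=B=I_n$, $D=0$, which lies outside what the lemma permits you to vary. The closing sentence that a non-identity $A$ or nonzero $D$ ``would only make the small terminal covariance harder to reach'' is not an argument: for a general controllable pair $(A,B)$ there is no reason the $\alpha_u$-direction of the covariance should be invariant, and your key identity $\alpha_u^{\intercal}\Sigma_{k+1}\alpha_u=\alpha_u^{\intercal}\Sigma_k\alpha_u$ relies essentially on $A=B=I$ and $D=0$. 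So as written you have established a weaker statement (existence of dynamics \emph{and} a scene making the inclusion strict) rather than the lemma (existence of a scene for the \emph{given} dynamics).

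The paper's route to strictness is quite different and keeps $A,B,D$ fixed. It takes $\Sigma_q$ as the MAX-COVAR output for target $(\mu,\Sigma^{-}_{\max})$ at some mean $\mu_q$; by optimality, the perturbed $(\mu_q,\Sigma_q+\epsilon I)$ cannot reach $(\mu,\Sigma^{-}_{\max})$ for any $\epsilon>0$. The work is then to show that a perturbed controller $\mathscr{C}(\lambda)$ (scaling the feedback gains by $1-\lambda_k$) drives $(\mu_q,\Sigma_q+\epsilon I)$ into $(\mu,\Sigma_{\max})$, and the scene parameters are chosen at the very end so that the chance-constraint slacks $z_{s,k},z_{u,k}$ absorb the $\sqrt{\epsilon}$-size perturbation terms. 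Your direct construction is more elegant when it applies, but to close the gap you would either have to produce a witness $q$ and scene that works for arbitrary fixed $(A,B,D)$, or fall back on a perturbation argument of the paper's type.
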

\begin{proof}
    Proof relegated to Appendix~\ref{sec:appendix}.
\end{proof}

\begin{theorem}[Maximum Coverage]\label{theorem:coverage}
    $\text{h-}\operatorname{BRS}(\mathcal{T}^{(r)}_{\mathrm{MAXCOVAR}} ) \supseteq \text{h-}\operatorname{BRS}(\mathcal{T}^{(r)}_{\mathrm{ANY}} )$ for all planning scenes, and there always exist planning scenes such that $\text{h-}\operatorname{BRS}(\mathcal{T}^{(r)}_{\mathrm{MAXCOVAR}} ) \supset \text{h-}\operatorname{BRS}(\mathcal{T}^{(r)}_{\mathrm{ANY}} ) \ \forall \ r \geq 1, \ \forall \ h \geq 1$.
\end{theorem}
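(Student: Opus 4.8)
The plan is to reduce the tree-level claim to the single-node Lemma~\ref{lemma:sigma_brs_lemma} by a node-by-node domination argument, using two structural facts. First, by the definition (\ref{eq:tree_brs_def}), $h\text{-}\operatorname{BRS}(\mathcal{T})$ is the union over the tree's nodes $\nu_i$ of their individual $(h-d_i)\text{-}\operatorname{BRS}$, so it suffices to account for each $\operatorname{ANY}$-node separately. Second, whether a distribution $q$ lies in $h'\text{-}\operatorname{BRS}(\mu,\Sigma)$ depends on the target covariance $\Sigma$ only through $\lambda_{\min}(\Sigma)$, since in $\operatorname{FEASIBLE}(q,(\mu,\Sigma),h'N)$ the matrix $\Sigma$ enters only via the terminal constraint of the form (\ref{eq:covar_goal_maxcovar}), $\lambda_{\max}(\Sigma_{h'N})\le\lambda_{\min}(\Sigma)$. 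Hence the target is to build, for matched runs of the two procedures, a map $\phi$ from the nodes of $\mathcal{T}^{(r)}_{\mathrm{ANY}}$ to those of $\mathcal{T}^{(r)}_{\mathrm{MAXCOVAR}}$ that fixes the root, preserves the parent relation (hence the hop-depth $d$), preserves the mean, and satisfies $\lambda_{\min}(\Sigma_{\phi(j)})\ge\lambda_{\min}(\Sigma_j)$ at every node $j$.

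To that end I would couple the two constructions so that at each iteration they expand the same node (identified by its mean) and sample the same query mean $\mu_q$; this is legitimate since the node-selection sub-routine depends on the tree only through the first moments of its nodes, which the induction keeps identical. Induct on $r$. Base case: both trees are the single root $\nu_0=(\mu_{\mathcal G},\Sigma_{\mathcal G})$, and $\phi$ maps root to root with equality. Inductive step: at iteration $r{+}1$ say the node with mean $\mu_k$ is expanded with query mean $\mu_q$. If $\operatorname{ANY}$ adds a child $\nu_j=(\mu_q,\Sigma_q^{\mathrm{ANY}})$ with edge controller $\mathscr C$, then (any legitimate edge-construction rule must store a distribution its edge controller actually steers to the parent) we have $(\mu_q,\Sigma_q^{\mathrm{ANY}})\xrightarrow[N]{\mathscr C}(\mu_k,\Sigma_k^{\mathrm{ANY}})$; in particular the covariance trajectory driven by $\mathscr C$ from $\Sigma_0=\Sigma_q^{\mathrm{ANY}}$ satisfies the state/control chance constraints (\ref{eq:state_constraint_maxcovar})--(\ref{eq:control_constraint_maxcovar}), the mean trajectory has $\mu_N=\mu_k$, and $\lambda_{\max}(\Sigma_N)\le\lambda_{\min}(\Sigma_k^{\mathrm{ANY}})$. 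By the induction hypothesis applied to the target node, $\lambda_{\min}(\Sigma_k^{\mathrm{ANY}})\le\lambda_{\min}(\Sigma_{\phi(k)}^{\mathrm{MAXCOVAR}})$, so the pair $(\mathscr C,\Sigma_q^{\mathrm{ANY}})$ is a feasible point of $\operatorname{MAXCOVAR}(\mu_q,(\mu_k,\Sigma_{\phi(k)}^{\mathrm{MAXCOVAR}}),N)$ — recalling that \ref{eq:J_max_covar} leaves $\Sigma_0$ free. Thus $\operatorname{MAXCOVAR}$ is feasible at this iteration and returns some $\Sigma_{\mathrm{max}}$ with $\lambda_{\min}(\Sigma_{\mathrm{max}})\ge\lambda_{\min}(\Sigma_q^{\mathrm{ANY}})$; extend $\phi$ by sending $\nu_j$ to the new $\operatorname{MAXCOVAR}$ node $(\mu_q,\Sigma_{\mathrm{max}})$, whose parent is $\phi(k)$, so its hop-depth is $d_{\phi(k)}+1=d_k+1=d_j$. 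If $\operatorname{ANY}$ adds nothing there is nothing to extend, and any iteration where $\operatorname{MAXCOVAR}$ succeeds while $\operatorname{ANY}$ fails only adds nodes to $\mathcal{T}_{\mathrm{MAXCOVAR}}$, which can only enlarge its $h\text{-}\operatorname{BRS}$; so $\phi$ may be taken as a bijection onto a subset of the $\operatorname{MAXCOVAR}$ tree and the hypothesis is maintained.

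Given $\phi$, the inclusion follows at once: for each node $\nu_j$ of $\mathcal{T}^{(r)}_{\mathrm{ANY}}$ with $d_j<h$, the second structural fact together with Lemma~\ref{lemma:sigma_brs_lemma} gives $(h-d_j)\text{-}\operatorname{BRS}(\mu_j,\Sigma_j^{\mathrm{ANY}})\subseteq (h-d_j)\text{-}\operatorname{BRS}(\mu_j,\Sigma_{\phi(j)}^{\mathrm{MAXCOVAR}})$, and since $d_{\phi(j)}=d_j$ the right-hand side is exactly one of the sets in the union (\ref{eq:tree_brs_def}) for $\mathcal{T}^{(r)}_{\mathrm{MAXCOVAR}}$ (terms with $d_i\ge h$ contribute nothing); taking the union over $j$ yields $h\text{-}\operatorname{BRS}(\mathcal{T}^{(r)}_{\mathrm{ANY}})\subseteq h\text{-}\operatorname{BRS}(\mathcal{T}^{(r)}_{\mathrm{MAXCOVAR}})$ for every planning scene. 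For strictness, take $r=1$ and run $\operatorname{ANY}$ with any edge-construction rule that at the single expansion returns a feasible controller with initial covariance $\Sigma^{\mathrm{ANY}}$ satisfying $\lambda_{\min}(\Sigma^{\mathrm{ANY}})<\lambda_{\min}(\Sigma_{\mathrm{max}})$ (such rules exist, e.g.\ shrink any feasible initial covariance); by the strict part of Lemma~\ref{lemma:sigma_brs_lemma} there are planning scenes with $(h-1)\text{-}\operatorname{BRS}(\mu_q,\Sigma^{\mathrm{ANY}})\subsetneq (h-1)\text{-}\operatorname{BRS}(\mu_q,\Sigma_{\mathrm{max}})$, and since the root contributes the same set to both unions the tree-level containment is strict; the same construction placed along one branch handles larger $r$.

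The main obstacle is the bookkeeping around the coupling: a priori the two procedures can produce different node sets, so "node-by-node domination" is not obviously well-posed. The observation that resolves this is the feasibility-transfer step above — $\operatorname{MAXCOVAR}$'s per-edge feasibility problem to a target whose covariance $\lambda_{\min}$-dominates $\operatorname{ANY}$'s is feasible whenever $\operatorname{ANY}$ succeeds, so the coupled $\operatorname{MAXCOVAR}$ run never falls behind and the node sets can be kept in bijection. Two minor points must also be dispatched: that $h\text{-}\operatorname{BRS}$ of a node depends on its covariance only through $\lambda_{\min}$ (so the non-strict $\lambda_{\min}$-comparison carried through the induction suffices, covering the equality case that Lemma~\ref{lemma:sigma_brs_lemma} states only for strict inequality), and that $\operatorname{MAXCOVAR}$'s returned optimizer is well-defined, which we take from the (assumed lossless) convex relaxation of Section~\ref{sec:convex_relax}.
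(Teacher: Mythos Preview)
Your approach is essentially the same as the paper's: both reduce the tree-level claim to Lemma~\ref{lemma:sigma_brs_lemma} via the node-by-node union decomposition (\ref{eq:tree_brs_def}), and both argue that at each expansion the $\operatorname{MAXCOVAR}$ node $\lambda_{\min}$-dominates the $\operatorname{ANY}$ node at the same mean and depth.

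Where your write-up differs is in the handling of the induction over $r$. The paper proves only the one-step statement --- from a \emph{common} tree $\mathcal T^{(n)}$, one more $\operatorname{ADD}$ yields $h\text{-}\operatorname{BRS}(\mathcal T^{(n+1)}_{\mathrm{MAXCOVAR}})\supset h\text{-}\operatorname{BRS}(\mathcal T^{(n+1)}_{\mathrm{ANY}})$ --- and then asserts that ``the result for a general $r$ follows'', without saying how, since after the first step the two trees are no longer common. Your feasibility-transfer step (if $\operatorname{ANY}$'s edge $(\mathscr C,\Sigma_q^{\mathrm{ANY}})$ reaches a target with $\lambda_{\min}(\Sigma_k^{\mathrm{ANY}})\le\lambda_{\min}(\Sigma_{\phi(k)}^{\mathrm{MAXCOVAR}})$, then the same pair is feasible for $\operatorname{MAXCOVAR}$'s edge problem, so $\operatorname{MAXCOVAR}$ never falls behind) is exactly what the paper's argument is missing to propagate the comparison to $r>1$. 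Your observation that $h'\text{-}\operatorname{BRS}(\mu,\Sigma)$ depends on $\Sigma$ only through $\lambda_{\min}(\Sigma)$ (via (\ref{eq:covar_goal_maxcovar})) is also a clean way to handle the non-strict $\lambda_{\min}$-comparison at the root and elsewhere, which Lemma~\ref{lemma:sigma_brs_lemma} as stated does not cover.

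One minor bookkeeping point: you assert both that the coupled runs keep the node-mean sets identical (so the Voronoi-bias selection coincides) \emph{and} that $\operatorname{MAXCOVAR}$ may acquire extra nodes when it succeeds but $\operatorname{ANY}$ fails. These cannot both hold in general, since extra nodes alter the Voronoi regions and hence future selections. The cleanest fix is to state the comparison for a fixed externally supplied sequence of (expand-node mean, query mean) pairs --- which is how the paper implicitly frames it --- and simply drop iterations where the designated expand-node is absent from a tree; your inclusion argument then goes through unchanged.
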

\begin{proof}
    Proof relegated to Appendix~\ref{sec:appendix}.
\end{proof}

\section{Experiments}\label{sec:experiments}
To illustrate our method, we conduct experiments for the motion planning of a quadrotor in a 2D plane. The lateral and longitudinal dynamics of the quadrotor are modeled as a triple integrator leading to a 6 DoF model with state matrices,
$$
A=\left[\begin{array}{ccc}
I_2 & \Delta T I_2 & 0_2 \\
0_2 & I_2 & \Delta T I_2 \\
0_2 & 0_2 & I_2
\end{array}\right], ~B=\left[\begin{array}{c}
0_2 \\
0_2 \\
\Delta T I_2
\end{array}\right], ~ D=0.1 I_6,
$$
a time step of 0.1 seconds, a horizon of $N = 20$, and a goal distribution $\mathcal{G}$ for the planning task as follows:
\begin{align*}
    \mu_{\mathcal{G}} = \mathbf{0}_{6 \times 1}, \quad \Sigma_{\mathcal{G}} = 0.1 *  \mathbf{I}_{6\times 6}.
\end{align*}
The control input space is characterized by a bounding box represented as 
$\alpha_u=\left\{\left[
\pm 1,0 \right]^{\top}, 
\left[0,\pm 1 \right]^{\top}\right\}$, 
$\beta_u=\{ \pm 25, \pm 25\}$.
The chance constraint linearization is performed around $ \Sigma_{r} = 1.2 \ \mathbf{I}_{6\times6} $, and $ Y_{r} = 15 \ \mathbf{I}_{2\times2} $. All the optimization programs are solved in Python using cvxpy \cite{cvxpy2016}.

\begin{figure}[t]
    \centering
    \includegraphics[trim=0 50 0 50, clip, width=\columnwidth]{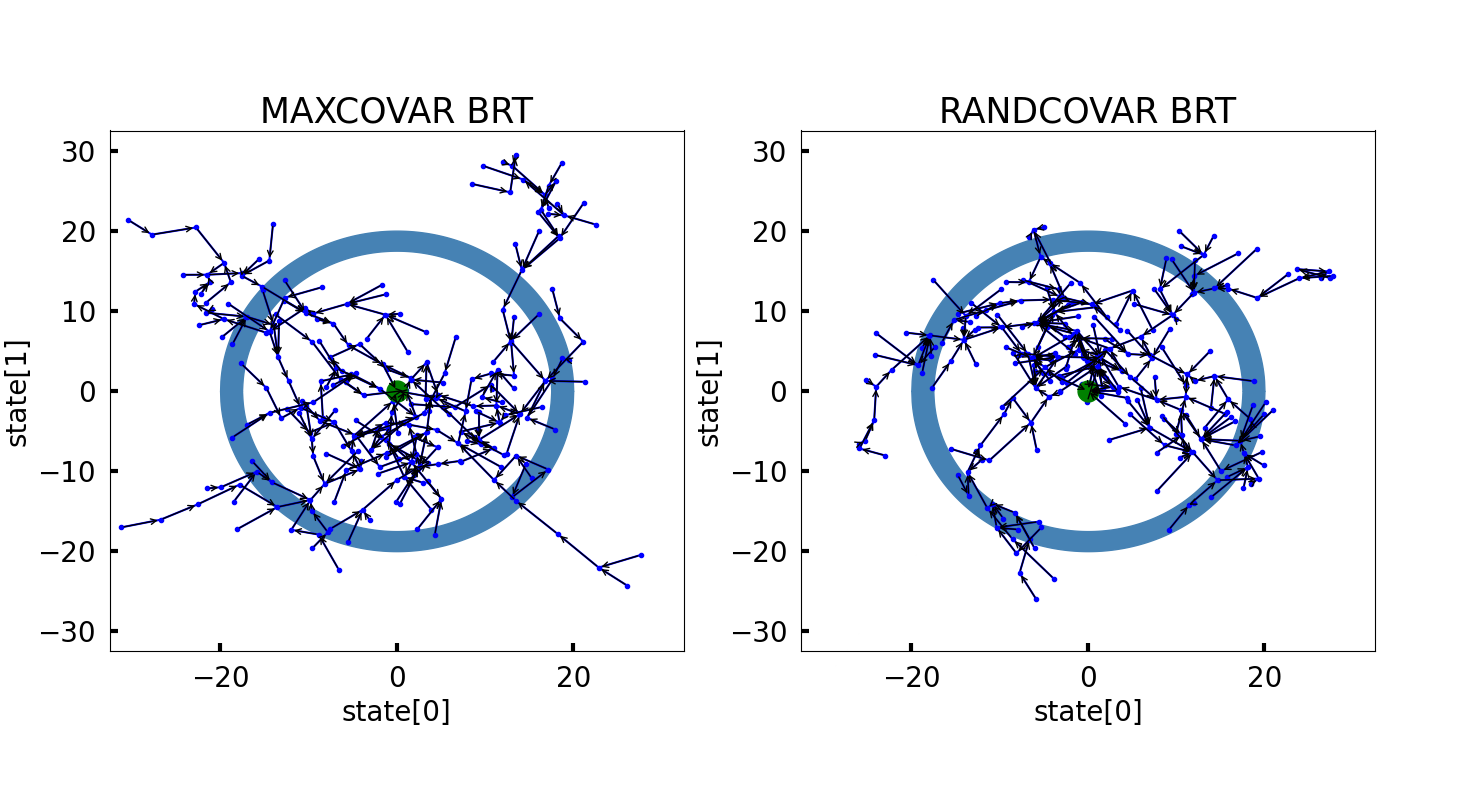}
    \vspace*{-.25in} 
    \caption{Coverage experiment setup: MAXCOVAR and RANDCOVAR BRTs}
    \label{fig:coverage_exp_setup}
\end{figure}

\textit{\textbf{Construction of the BRTs}}: For the tree construction procedure, a sampling radius of $r_{\mathrm{sample}} = \left[ \pm 5, \pm 5, \pm 2.5, \pm 2.5, \pm 1.25, \pm 1.25 \right] $ was used. Fig.~\ref{fig:coverage_exp_setup} shows the constructed MAXCOVAR and the RANDCOVAR trees corresponding to the goal distribution $\mathcal{G}$ that were used for the experiments on coverage. The figure displays the first two states of the 6 DoF model: the x ($\operatorname{state}[0]$) and y ($\operatorname{state}[1]$) locations of the quadrotor on the x-y plane. Each node on the plot denotes the first two dimensions of the state distribution mean with edges between the nodes displayed as directed arrows such that the corresponding $N$-step control sequences are stored offline. The blue annulus around the two trees in Figure~\ref{fig:coverage_exp_setup} represents the region over which the query means were uniformly sampled. Both the trees in Fig.~\ref{fig:coverage_exp_setup} consist of 265 nodes and were generated with the same random seed.

The construction procedure for the MAXCOVAR tree was described in Algorithm~\ref{alg:construct_brt}. For any existing node $\nu_{k}$ on the tree that's selected to expand, and for any query mean $\mu_{q}$ sampled from a box around it, the new node covariance $\Sigma_{q, \mathrm{max}}$ and the corresponding edge controller $\mathscr{C}_{q, \mathrm{max}}$ is the result of solving the MAXCOVAR optimization problem.

For the construction of the RANDCOVAR tree, the node covariance $\Sigma_{\mathrm{rand}}$ was randomly sampled from the space of positive definite matrices, similar in spirit to \cite{csbrm_tro2024}, and the edge controller was given by the optimal steering control from $(\mu_{q}, \Sigma_{\mathrm{rand}})$ to $ (\mu_{k}, \Sigma_{k})$. To construct samples from the positive definite matrix space, the eigenvalues and orthonormal eigenvectors that constitute a positive definite matrix are sampled separately. It was observed empirically that randomly sampling node covariances resulted in rejecting a lot of candidate nodes due to the non-existence of a corresponding steering maneuver. Therefore, eigenvalues of the candidate node covariance $\Sigma_{\mathrm{rand}}$ were sampled to ensure that \ $ \lambda_{\mathrm{min}}( \Sigma_{\mathrm{rand}}) \leq \lambda_{\mathrm{min}}( \Sigma_{\mathrm{max}})$.


\begin{figure}[t]
    \centering
    \includegraphics[width=.9\columnwidth]{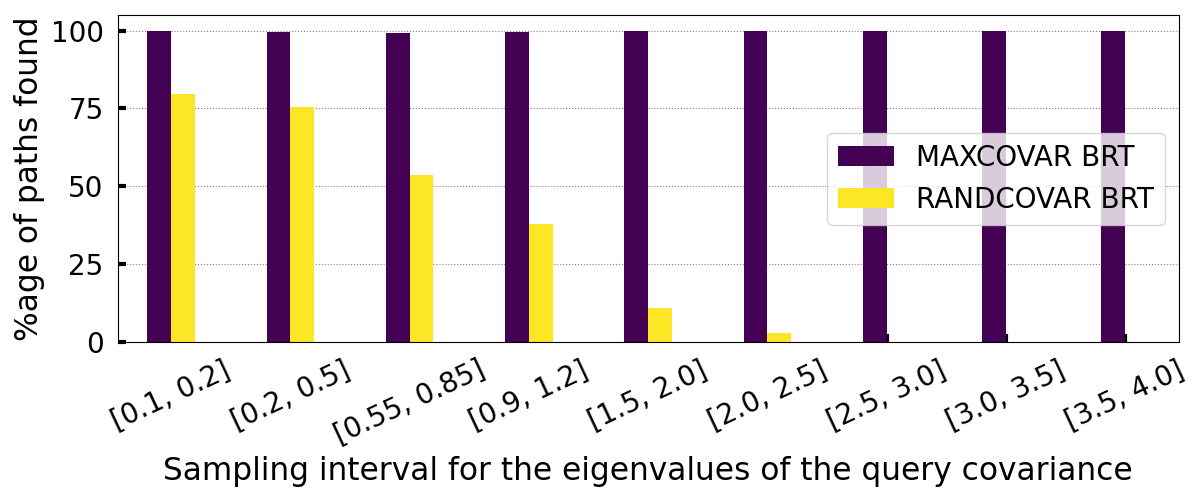}
    \caption{Coverage experiment metrics}
    \label{fig:coverage_hist}
\end{figure}

\textit{\textbf{Coverage Experiment}}: The coverage experiment proceeds by sampling query distributions and attempting connects to the two trees to find paths. Query means were sampled from the blue annulus as shown in Fig.~\ref{fig:coverage_exp_setup}, and the query covariances were considered to be diagonal matrices where the diagonal entries were sampled uniformly over an interval. The experiment was repeated for different intervals of sampling the diagonal entries of the query covariance as shown in the x-axis of Fig.~\ref{fig:coverage_hist}. For each interval, the experiment was repeated 250 times and the percentage of times a path was found to the two trees was reported.

\textit{\textbf{Interpretation of the Coverage Experiment}}: From Fig.~\ref{fig:coverage_hist}, it can be seen that for intervals corresponding to higher candidate eigenvalues of the query covariance, the RANDCOVAR tree with randomly sampled node covariances is not able to find paths in contrast to MAXCOVAR tree where the node covariances and the edges were constructed explicitly to provide maximal coverage. This is a direct consequence of Lemma~\ref{lemma:sigma_brs_lemma} which says that for two positive definite matrices $\Sigma$ and $\Sigma^{-}$, there exist planning scenes such that distributions with a larger spectral radius of the covariance reach $\Sigma$ as compared to $\Sigma^{-}$ (Lemma~\ref{lemma:sigma_brs_lemma} follows a proof by construction, see Appendix~\ref{sec:appendix}).

\begin{figure}[htb]
    \centering
    \includegraphics[width=\columnwidth]{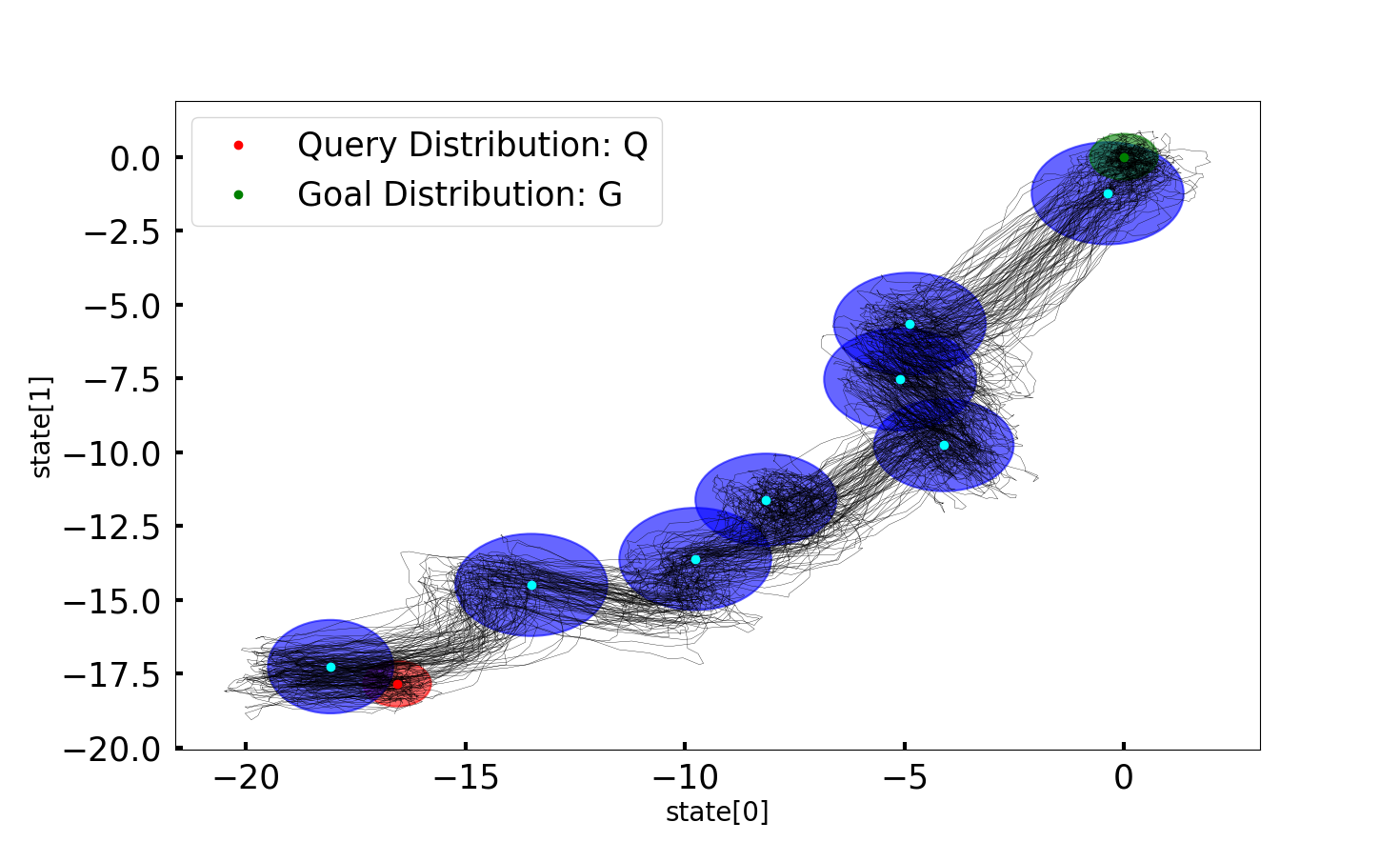}
    \vspace*{-.35in} 
    \caption{Real-time planning through the pre-computed MAXCOVAR BRT}
    \label{fig:9stepchain}
\end{figure}

\textit{\textbf{Real-time planning through the MAXCOVAR BRT}}: The generated tree and the associated control sequences were stored offline. For real-time planning, a query mean $\mu_{q}$ was randomly sampled in a box around the origin with a sampling radius of $[ \pm 25, \pm 25, \pm 25, \pm 25, 0, 0 ]$ with a query covariance of $\Sigma_{q} = 0.1 \ \mathbf{I}_{6\times6}$. Fig.~\ref{fig:9stepchain} shows one such run that resulted in a $9$-hop chained maneuver that drives the sampled query mean to the goal (a total of $9 \times N = 180$ timesteps). It took $1$ second to find a feasible path through the offline generated BRT, and around $2$ minutes to find a path through \ref{eq:J}($q, \mathcal{G}, N$) for $N = 180$. In Fig.~\ref{fig:9stepchain}, the red and green ellipses correspond to the $3$-sigma confidence interval of the sampled query node and the goal node respectively. The blue ellipses correspond to the distributions stored in the nodes of the discovered feasible path. Monte carlo simulations were performed and the system trajectories are represented by black lines. The only real-time computation that was done was to find a control sequence that drives the system from the red ellipse to the first blue ellipse in the sequence, the rest of the trajectories were propagated using controllers stored along each edge of the discovered feasible path of the BRT.

\section*{Conclusion}
In this paper we introduced Maximal Covariance Backward Reachable Trees (MAXCOVAR BRT), a multi-query algorithm for probabilistic planning  under constraints on the control input with explicit coverage guarantees. Our contribution was a novel optimization formulation to add nodes and construct the corresponding edge controllers such that the generated roadmap results in provably maximal coverage. The notion of coverage of a roadmap was characterized formally via h-$\operatorname{BRS}$ (Backward Reachable Set of Distributions) of a tree of distributions, and our proposed method was supported by theoretical analysis as well as extensive simulation on a 6 DoF model.
\small
\bibliographystyle{ieeetr}
\bibliography{references}

\appendix \label{sec:appendix}

\begin{proof}[\textbf{Proof of Lemma}~\ref{lemma:sigma_brs_lemma}]
    We want to show that $ \ \forall \ p \in h\text{-}\operatorname{BRS}(\mu, \Sigma^{-}_{\mathrm{max}}); \ \exists \ \mathscr{C}_{p} $ s.t.\ $ p \xlongrightarrow[hN]{\mathscr{C}_{p}} (\mu, \Sigma_{\mathrm{max}}) $, what we refer to as the \textit{forward side} of the argument, and also that $\ \exists \ q $ s.t.\ $ q \in h\text{-}\operatorname{BRS}(\mu, \Sigma_{\mathrm{max}})$ and $ \nexists \ \mathscr{C}_{q} $ s.t.\ $ q \xlongrightarrow[hN]{\mathscr{C}_{q}} (\mu, \Sigma^{-}_{\mathrm{max}}) $, what we refer to as the \textit{backward side} of the argument.

    We first prove the forwards side of the argument. Let $p$ be any element of $h\text{-}\operatorname{BRS}(\mu, \Sigma^{-}_{\mathrm{max}})$. Therefore, $ \exists \ \mathscr{C}_p $ s.t.\ the equations (\ref{eq:covar_goal_maxcovar})-(\ref{eq:control_constraint_maxcovar}) are satisfied. From (\ref{eq:covar_goal_maxcovar}), $ \Sigma_{{hN}} (p, \mathscr{C}_{p}) $ s.t.\ $ \lambda_{\mathrm{max}}( \Sigma_{hN} (p, \mathscr{C}_{p}) ) \leq \lambda_{\mathrm{min}}(\Sigma^{-}_{\mathrm{max}})$. Since $ \lambda_{\mathrm{min}}( \Sigma^{-}_{\mathrm{max}} ) < \lambda_{\mathrm{min}}( \Sigma_{\mathrm{max}} ) $, $ \lambda_{\mathrm{max}}( \Sigma_{hN} (p, \mathscr{C}_{p}) ) \leq \lambda_{\mathrm{min}}(\Sigma_{\mathrm{max}})$. Therefore, $ p \in h\text{-}\operatorname{BRS}(\mu, \Sigma_{\mathrm{max}})$ under the same control sequence $ \mathscr{C}_{p} $ that drives $p$ to $ (\mu, \Sigma^{-}_{\mathrm{max}}) $. Since the above argument can be reproduced for any arbitrary distribution that reaches $(\mu, \Sigma^{-}_{\mathrm{max}})$, it follows that $ \forall \ p \in h\text{-}\operatorname{BRS}(\mu, \Sigma^{-}_{\mathrm{max}}), p \in h\text{-}\operatorname{BRS}(\mu, \Sigma_{\mathrm{max}})$.

    Now, we prove the backwards side of the argument. Specifically, we construct $q$ and a planning scene $ \{ (\alpha_{x}, \beta_{x}, \epsilon_{x}), (\alpha_{u}, \beta_{u}, \epsilon_{u}) \} $ such that $q \in h\text{-}\operatorname{BRS}(\mu, \Sigma_{\mathrm{max}})$ and $ q \notin h\text{-}\operatorname{BRS}(\mu, \Sigma^{-}_{\mathrm{max}}) $. Consider $q(\mu_{q}, \Sigma_{q})$ such that,
    \begin{align}
        \Sigma_{q}, \mathscr{C}_{q} \longleftarrow \operatorname{MAX-COVAR}( \mu_{q}, (\mu, \Sigma^{-}_{\mathrm{max}}), hN), \label{eq:sigma_q}
    \end{align}
    and $\lambda_{\mathrm{max}}(\Sigma_{q}) < \infty$. We also assume that for $ (\mu_{q}, \Sigma_{q})$ and $\mathscr{C}_{q}$, the corresponding state and control chance constraints (\ref{eq:state_constraint_maxcovar})-(\ref{eq:control_constraint_maxcovar}) are non-tight, i.e., are strict inequalities. This is a mild assumption and can be shown to hold by constructing $\mu_{q}$ in an appropriate manner.

    From (\ref{eq:sigma_q}), $\nexists \ \Sigma_{q^{+}}, \mathscr{C}_{q^{+}}$ s.t.\ $ \lambda_{\mathrm{min}}(\Sigma_{q^{+}}) > \lambda_{\mathrm{min}}(\Sigma_q)$ and 
    $ (\mu_q, \Sigma_{q^{+}}) \xlongrightarrow[kN]{\mathscr{C}_{q^{+}}} (\mu, \Sigma^{-}_{\mathrm{max}})$ $\ast$\label{sent:perturb}.
    
    Now, consider a perturbation to $q$ such that $\Sigma(\epsilon) = \Sigma_{q} + \epsilon \mathrm{I}$ for some $\epsilon > 0$. From $(\ast)$~\ref{sent:perturb}, $ (\mu_{q}, \Sigma(\epsilon)) \notin h\text{-}\operatorname{BRS}(\mu, \Sigma^{-}_{\mathrm{max}})$ since $ \lambda_{\mathrm{min}}(\Sigma(\epsilon)) > \lambda_{\mathrm{min}}(\Sigma_{q})$ for $\epsilon > 0$. We show that $\exists$ planning scenes $\{ (\alpha_{x}, \beta_{x}, \epsilon_{x}), (\alpha_{u}, \beta_{u}, \epsilon_{u}) \}$ $\exists \ \epsilon > 0$ s.t.\ $ (\mu_q, \Sigma(\epsilon)) \in h\text{-}\operatorname{BRS}(\mu, \Sigma_{\mathrm{max}})$.

    Consider $\mathscr{C}(\lambda)$ as the candidate control sequence s.t.\ $ (\mu_{q}, \Sigma(\epsilon)) \xlongrightarrow[hN]{\mathscr{C}(\lambda)} (\mu, \Sigma_{\mathrm{max}})$ where $\lambda \coloneqq \{\lambda_{k}\}_{t=0}^{hN-1}$ is a perturbation to $\mathscr{C}_{q}$ defined as follows,
    \begin{align*}
        K_{k}(\mathscr{C}(\lambda)) &= (1-\lambda_{k}) K_{k}(\mathscr{C}_{q}), \\
        v_{k}(\mathscr{C}(\lambda)) &= v_{k}(\mathscr{C}_{q}),
    \end{align*}
    $\forall \ k = 0, 1, \cdots hN-1.$ Under the perturbed control sequence $\mathscr{C}(\lambda)$, the mean dynamics are unaffected since the feedforward term $v_k(\mathscr{\lambda}) = v_{k}(\mathscr{C}_{q})$ is unperturbed, i.e.\ $\mu_{k}(\mathscr{C}(\lambda)) = \mu_{k}(\mathscr{C}_{q})$. We now express the dynamics of the state covariance under the perturbed initial covariance and control sequence $\Sigma(\epsilon)$ and $\mathscr{C}(\lambda)$ as $ \Sigma_{k}(\Sigma(\epsilon), \mathscr{C}(\lambda))$. We have,
    \begin{align*}
        \Sigma_{1}(\Sigma, \mathscr{C}) &= (A + BK_{0}(\mathscr{C}))(\Sigma_{q} + \epsilon \mathrm{I}) (A + BK_{0}(\mathscr{C}))\t
        + DD\t  \\
        &= \Sigma_{1}(\Sigma_{q}, \mathscr{C}) + \epsilon \gamma_{1}(\mathscr{C})
    \end{align*}
    where $ \Sigma_{1}(\Sigma_{q}, \mathscr{C}) = (A + BK_{0}(\mathscr{C}))\Sigma_{q}(A + BK_{0}(\mathscr{C}))\t $ and $ \gamma_{1}(\mathscr{C}) = (A + BK_{0}(\mathscr{C}))(A + BK_{0}(\mathscr{C}))\t$. Through induction, we can express $ \Sigma_{k}(\Sigma, \mathscr{C})$ as,
    \begin{align}
        \Sigma_{k}(\Sigma, \mathscr{C}) = \Sigma_{k}(\Sigma_{q}, \mathscr{C}) + \epsilon \gamma_{k}(\mathscr{C}), \label{eq:sigmak_recurrence}
    \end{align}
    where $ \Sigma_{0}(\Sigma_{q}, \mathscr{C}) = \Sigma_{q}$, $ \gamma_{k}(\mathscr{C}) = (A + BK_{k-1}(\mathscr{C}))\gamma_{k-1}(\mathscr{C})(A + BK_{k-1}(\mathscr{C}))\t$, and $ \gamma_{0}(\mathscr{C}) = \mathrm{I}$.

    Writing the state chance constraints (\ref{eq:state_constraint_maxcovar}) corresponding to $ \Sigma(\epsilon) $ and $ K(\mathscr{C}(\lambda))$,
    \begin{align}
        \mathscr{S}_{k}(\Sigma, \mathscr{C}) &= \Phi^{-1}(1-\epsilon_{x})\sqrt{ \alpha_{x}\t \Sigma_{k}(\Sigma, \mathscr{C})\alpha_{x}} + \alpha_{x}\t \mu_{k}(\mathscr{C} ) - \beta_{x}. 
    \end{align}
    From (\ref{eq:sigmak_recurrence}) and (\ref{eq:state_chanceconstraint_perturb}), and using $\sqrt{a + b} \leq \sqrt{a} + \sqrt{b} \ \forall \ a,b \geq 0$,
    \begin{align}
        \mathscr{S}_{k}(\Sigma, \mathscr{C}) &\leq  \Phi^{-1}(1-\epsilon_{x}) \left[\sqrt{ \alpha_{x}\t \Sigma_{k}(\Sigma_{q}, \mathscr{C})\alpha_{x}} + \sqrt{\epsilon}\sqrt{ \alpha_{x}\t \gamma_{k}(\mathscr{C})\alpha_{x}}  \right] \nonumber \\ 
        & + \alpha_{x}\t \mu_{k}(\mathscr{C}) - \beta_{x}. \label{eq:state_chanceconstraint_perturb}
    \end{align}
    From the state chance constraint corresponding to $\Sigma_{q}$ and $\mathscr{C}_{q}$,
    \begin{align}
        \mathscr{S}_{k}(\Sigma_{q}, \mathscr{C}_{q}) = \Phi^{-1}(1-\epsilon_{x})\sqrt{ \alpha_{x}\t \Sigma_{k}(\Sigma_{q}, \mathscr{C}_{q})\alpha_{x}} + \alpha_{x}\t \mu_{k}(\mathscr{C}_{q}) - \beta_{x} < 0. \nonumber
    \end{align}
    Therefore,
    \begin{align}
        \alpha_{x}\t \mu_{k}(\mathscr{C}_{q}) - \beta_{x} = -z_{s,k} - \left[ \Phi^{-1}(1-\epsilon_{x})\sqrt{ \alpha_{x}\t \Sigma_{k}(\Sigma_{q}, \mathscr{C}_{q})\alpha_{x}} \right], \label{eq:state_constraint_slack_substitute}
    \end{align}
    where $z_{s,k} > 0$ is the slack variable associated with the state chance constraint at the $k$-th time-step for the system initialized at $\Sigma_{q}$ and evolving under $\mathscr{C}_{q}$. Substituting (\ref{eq:state_constraint_slack_substitute}) in (\ref{eq:state_chanceconstraint_perturb}), and since $ \mu_{k}(\mathscr{C}) = \mu_{k}(\mathscr{C}_{q})$,
    \begin{align}
        \mathscr{S}_{k}(\Sigma, \mathscr{C}) &\leq  \Phi^{-1}(1-\epsilon_{x}) \left[\sqrt{ \alpha_{x}\t \Sigma_{k}(\Sigma_{q}, \mathscr{C})\alpha_{x}} -\sqrt{ \alpha_{x}\t \Sigma_{k}(\Sigma_{q}, \mathscr{C}_{q})\alpha_{x}}  \right] \nonumber \\ 
        & + \sqrt{\epsilon}\sqrt{ \alpha_{x}\t \gamma_{k}(\mathscr{C})\alpha_{x}} - z_{s,k}. \label{eq:state_chanceconstraint_perturb_1}        
    \end{align}
    We construct perturbations $\epsilon$ and $\lambda$ s.t.\ $ \mathscr{S}_{k}(\Sigma, \mathscr{C}) \leq 0 \ \forall \ k = 0, 1, \cdots, hN-1$. The following holds for the desired value of the perturbations,
    \begin{align}
        \sqrt{\epsilon}\sqrt{ \alpha_{x}\t \gamma_{k}(\mathscr{C})\alpha_{x}} \leq z_{s,k} &- \Phi^{-1}(1-\epsilon_{x}) \left[ \sqrt{ \alpha_{x}\t \Sigma_{k}(\Sigma_{q}, \mathscr{C})\alpha_{x}} \right. \nonumber \\  &- \left. \sqrt{ \alpha_{x}\t \Sigma_{k}(\Sigma_{q}, \mathscr{C}_{q})\alpha_{x}} \right] \tag{S}\label{eq:state_constraint_ineq}
    \end{align}
    Following a similar analysis as the state chance constraints above, we can express the control chance constraints (\ref{eq:control_constraint_maxcovar}) corresponding to $\Sigma(\epsilon)$ and $\mathscr{C}(\lambda)$ as,
    \begin{align}
        \mathcal{U}_{k}(\Sigma, \mathscr{C}) &\leq  \Phi^{-1}(1-\epsilon_{u}) \left[\sqrt{ \alpha_{u}\t K_k(\mathscr{C}) \Sigma_{k}(\Sigma_{q}, \mathscr{C})K_k\t(\mathscr{C})\alpha_{u}} \right. \nonumber \\
        & \left. -\sqrt{ \alpha_{u}\t K_k(\mathscr{C})\Sigma_{k}(\Sigma_{q}, \mathscr{C}_{q})K_k\t(\mathscr{C})\alpha_{u}}  \right] \nonumber \\ &+ \sqrt{\epsilon}\sqrt{ \alpha_{u}\t K_k(\mathscr{C}) \gamma_{k}(\mathscr{C}) K_k\t(\mathscr{C}) \alpha_{u}} - z_{u,k}, \label{eq:control_chanceconstraint_perturb}
    \end{align}
    where $z_{u,k} > 0$ is the slack variable associated with the control chance constraint at the $k$-th time-step. For the desired values of the perturbations, $ \mathcal{U}_{k}(\Sigma, \mathscr{C}) \leq 0$, and we get the following,
    \begin{align}
        \sqrt{\epsilon}&\sqrt{ \alpha_{u}\t K_k(\mathscr{C}) \gamma_{k}(\mathscr{C}) K_k\t(\mathscr{C}) \alpha_{u}} \leq  \nonumber \\ & \Phi^{-1}(1-\epsilon_{u}) \left[ \sqrt{ \alpha_{u}\t K_k(\mathscr{C})\Sigma_{k}(\Sigma_{q}, \mathscr{C}_{q})K_k\t(\mathscr{C})\alpha_{u}} \right. \nonumber \\ &- \left. \sqrt{ \alpha_{u}\t K_k(\mathscr{C}) \Sigma_{k}(\Sigma_{q}, \mathscr{C})K_k\t(\mathscr{C})\alpha_{u}}  \right]  + z_{u,k}. \tag{U} \label{eq:control_constraint_ineq}
    \end{align}
    For there to always exist some $\epsilon > 0$ s.t.\ (\ref{eq:state_constraint_ineq}) and (\ref{eq:control_constraint_ineq}) hold true, it is a sufficient condition that $ \sqrt{ \alpha_{x}\t \Sigma_{k}(\Sigma_{q}, \mathscr{C}_{q})\alpha_{x}} - \sqrt{ \alpha_{x}\t \Sigma_{k}(\Sigma_{q}, \mathscr{C}(\lambda))\alpha_{x}} \geq 0$, and $ \sqrt{ \alpha_{u}\t K_{k}(\mathscr{C}) \Sigma_{k}(\Sigma_{q}, \mathscr{C}_{q}) K\t_{k}(\mathscr{C}) \alpha_{u}} - \sqrt{ \alpha_{u}\t K_{k} \Sigma_{k}(\Sigma_{q}, \mathscr{C}(\lambda)) K\t_{k} \alpha_{u}} \geq 0$, since $ z_{s,k}$ and $z_{u,k}$ are strictly positive $ \forall \ k = 0, 1, \cdots, hN-1$. Therefore, we construct $\lambda$ such that $ \Sigma_{k}(\Sigma_{q}, \mathscr{C}(\lambda)) \prec \Sigma_{k}(\Sigma_{q}, \mathscr{C}_{q}) \ \forall \ k=0, 1, \cdots, hN-1 $. Writing $\Sigma_{1}(\Sigma_{q}, \mathscr{C}(\lambda))$ in terms of $\Sigma_{1}(\Sigma_{q},
    \mathscr{C}_{q})$, \mycomment{We construct a perturbation $\lambda$ to $\mathscr{C}_{q}$ such that $ \Sigma_{k} (\Sigma_{q}, \mathscr{C}(\lambda)) \prec \Sigma_{k}(\Sigma_{q}, \mathscr{C}_{q})$, and $ \gamma_{k}(\mathscr{C}(\lambda)) \prec \gamma_{k}(\mathscr{C}_{q}) \ \forall \ k$.}
    \begin{align}
        \Sigma_{1}(\Sigma_{q}, \mathscr{C}(\lambda)) &= (A + BK_{0}(\lambda)) \Sigma_{q} (A + BK_{0}(\lambda))\t + DD\t \nonumber \\
        &=\begin{aligned}
        \Sigma_{1}(&\Sigma_{q}, \mathscr{C}_{q}) + \lambda^{2}_{0}BK_{0}\Sigma_{q}(BK_{0})\t  \\ &\qquad - \lambda_{0} [ BK_{0} \Sigma_{q} (A + BK_{0})\t \\ &\qquad \qquad + (A+BK_{0})\Sigma_{q} (BK_{0})\t ].
        \end{aligned} \label{eq:sigma1_c_cq}
    \end{align}
    From (\ref{eq:sigma1_c_cq}), $\lambda^{2}_{0}BK_{0}\Sigma_{q}(BK_{0})\t \prec \lambda_{0} [ BK_{0} \Sigma_{q} (A + BK_{0})\t + (A+BK_{0})\Sigma_{q} (BK_{0})\t ]$ is a sufficient condition to ensure $ \Sigma_{1} (\Sigma_{q}, \mathscr{C}(\lambda)) \prec \Sigma_{1}(\Sigma_{q}, \mathscr{C}_{q})$. This is done by constructing $\lambda_{0}$ s.t.\ $ \lambda_{\mathrm{max}}(\lambda^{2}_{0}BK_{0}\Sigma_{q}(BK_{0})\t) < \lambda_{\mathrm{min}}(\lambda_{0} [ BK_{0} \Sigma_{q} (A + BK_{0})\t + (A+BK_{0})\Sigma_{q} (BK_{0})\t) ]$. Therefore,
    \begin{align}
        \lambda_{0} < \frac{\lambda_{\mathrm{min}}(BK_{0} \Sigma_{q} (A + BK_{0})\t + (A+BK_{0})\Sigma_{q} (BK_{0})\t)}{ \lambda_{\mathrm{max}}(BK_{0}\Sigma_{q}(BK_{0})\t) }. \label{eq:lambda_0}
    \end{align}
    Repeating the above arguments recursively for $\Sigma_{k}(\Sigma_{q}, \mathscr{C}(\lambda))$ and $\Sigma_{k}(\Sigma_{q}, \mathscr{C}_{q})$, $\Sigma_{k+1}(\Sigma_{q}, \mathscr{C}(\lambda)) = (A + BK_{k}(\lambda)) \Sigma_{k}(\Sigma_{q}, \mathscr{C}(\lambda)) (A + BK_{k}(\lambda))\t + DD\t$. Assuming $\lambda_{k-}$ is such that $ \Sigma_{k}(\Sigma_{q}, \mathscr{C}(\lambda)) \prec \Sigma_{k}(\Sigma_{q}, \mathscr{C}_{q}) $, $\lambda^{2}_{k}BK_{k}\Sigma_{k}(\Sigma_{q}, \mathscr{C}_{q})(BK_{k})\t \preceq \lambda_{k} [ BK_{k} \Sigma_{k}(\Sigma_{q}, \mathscr{C}_{q}) (A + BK_{k})\t + (A+BK_{k})\Sigma_{k}(\Sigma_{q}, \mathscr{C}_{q}) (BK_{k})\t ]$ is a sufficient condition to ensure $ \Sigma_{k+1}(\Sigma_{q}, \mathscr{C}(\lambda)) \preceq \Sigma_{k+1}(\Sigma_{q}, \mathscr{C}_{q})$. Therefore,
    \begin{align}
        \lambda_{k} <= \frac{\mathscr{N}^{\Sigma}_{k}}{ \lambda_{\mathrm{max}}(BK_{k}\Sigma_{k}(\Sigma_{q}, \mathscr{C}_{q})(BK_{k})\t) } \label{eq:lambda_k}
    \end{align}
    $\forall \ k=0, 1, \cdots, hN-1$, where $ \mathscr{N}^{\Sigma}_{k} \coloneqq \lambda_{\mathrm{min}}(BK_{k} \Sigma_{k}(\Sigma_{q}, \mathscr{C}_{q}) (A + BK_{k})\t + (A+BK_{k})\Sigma_{k}(\Sigma_{q}, \mathscr{C}_{q}) (BK_{k})\t) $.

    We define quantities $\epsilon_{s,k}(\alpha_{x}, \epsilon_{x}; \lambda)$, and $\epsilon_{u,k}(\alpha_{u}, \epsilon_{u}; \lambda)$ as follows,
    \begin{align}
        &\epsilon_{s,k}(\alpha_{x}, \epsilon_{x}; \lambda) \coloneqq \frac{z_{s,k}}{\lVert \alpha_{x} \rVert \sqrt{ \lambda_{\mathrm{max}}(\gamma_{k}(\mathscr{C}(\lambda))) } } \nonumber \\ &+ \Phi^{-1}(1-\epsilon_{x}) \frac{ \sqrt{\lambda_{\mathrm{min}}( \Sigma_{k}(\Sigma_{q}, \mathscr{C}_{q}))} - \sqrt{\lambda_{\mathrm{max}}( \Sigma_{k}(\Sigma_{q}, \mathscr{C}(\lambda)))} }{\sqrt{ \lambda_{\mathrm{max}}(\gamma_{k}(\mathscr{C}(\lambda))) }}, \label{eq:epsilon_sk_lb} \\
        &\epsilon_{u,k}(\alpha_{u}, \epsilon_{u}; \lambda) \coloneqq \frac{z_{u,k}}{\lvert (1 - \lambda_{k}) \rvert\lVert K\t_{k}(\mathscr{C}_{q})\alpha_{u} \rVert \sqrt{ \lambda_{\mathrm{max}}(\gamma_{k}(\mathscr{C}(\lambda))) } } \nonumber \\ &+ \Phi^{-1}(1-\epsilon_{u}) \frac{ \sqrt{\lambda_{\mathrm{min}}( \Sigma_{k}(\Sigma_{q}, \mathscr{C}_{q}))} - \sqrt{\lambda_{\mathrm{max}}( \Sigma_{k}(\Sigma_{q}, \mathscr{C}(\lambda)))} }{\sqrt{ \lambda_{\mathrm{max}}(\gamma_{k}(\mathscr{C}(\lambda))) }}. \label{eq:epsilon_uk_lb}
    \end{align}
    $\epsilon_{s,k}(\alpha_{x}, \epsilon_{x}; \lambda)$, and $\epsilon_{u,k}(\alpha_{u}, \epsilon_{u}; \lambda)$ are lower bounds on the RHS of (\ref{eq:state_constraint_ineq}) and (\ref{eq:control_constraint_ineq}) respectively. Therefore, a sufficient condition for (\ref{eq:state_constraint_ineq}) and (\ref{eq:control_constraint_ineq}) to hold is the following,
    \begin{align}
        \epsilon \leq \epsilon_{s}(\alpha_{x}, \epsilon_{x}; \lambda) \coloneqq \min_{k} \epsilon_{s,k}(\alpha_{x}, \epsilon_{x}; \lambda), \\
        \epsilon \leq \epsilon_{u}(\alpha_{u}, \epsilon_{u}; \lambda) \coloneqq \min_{k} \epsilon_{u,k}(\alpha_{u}, \epsilon_{u}; \lambda).
    \end{align}
    For specified $\lambda$, $ \epsilon_{s,k}(\alpha_{x}, \epsilon_{x}; \lambda)$ and $ \epsilon_{u,k}(\alpha_{u}, \epsilon_{u}; \lambda) $ are functions of the planning scene $ \{ (\alpha_{x}, \epsilon_{x}), (\alpha_{u}, \epsilon_{u}) \} $ s.t.\ $ \epsilon_{s,k}(\alpha_{x}, \epsilon_{x}; \lambda)$ and $ \epsilon_{u,k}(\alpha_{u}, \epsilon_{u}; \lambda) $ are monotonically decreasing functions of $ \lVert \alpha_{x} \rVert, \lVert \alpha_{u} \rVert \neq 0 $ respectively.
    
    From $ \lambda$ specified earlier, the above inequalities are a function of just the planning scene $ \{ (\alpha_{x}, \epsilon_{x}), (\alpha_{u}, \epsilon_{u}) \}$. We now derive a condition on $\epsilon$ from the terminal goal reaching constraint, and show that there always exists some $\epsilon >0$ such that all the conditions hold by choosing the planning scene in a careful manner. Writing the desired terminal covariance constraint under the perturbations,
    \begin{align}
        \lambda_{\mathrm{min}}( \Sigma^{-}_{\mathrm{max}} ) < \lambda_{\mathrm{max}}( \Sigma_{hN} (\Sigma, \mathscr{C})) \leq \lambda_{\mathrm{min}}(\Sigma_{\mathrm{max}}).
    \end{align}
    We define the function $f: [0,\infty) \rightarrow \mathbb{R}^{+}$ as the following,
    \begin{align}
        f(\epsilon; \lambda) \coloneqq \lambda_{\mathrm{max}}(\Sigma_{hN} (\Sigma, \mathscr{C})) = \lambda_{\mathrm{max}}(\Sigma_{hN} (\Sigma_{q}, \mathscr{C}) + \epsilon \gamma_{k}(\mathscr{C})),
    \end{align}
    where for a specified $\lambda$, $f(.; \lambda)$ is a function of $\epsilon$ alone. In particular, $f(.)$ is a continuous, strictly increasing (under mild assumptions on $\mathscr{C}_{q}$ i.e.\ $(A + BK_{k})$ is full rank $\forall \ k$), and a convex function of $\epsilon$. $f(0; \lambda) = \lambda_{\mathrm{max}}( \Sigma_{hN}( \Sigma_{q}, \mathscr{C})) \leq \lambda_{\mathrm{max}}( \Sigma_{hN}( \Sigma_{q}, \mathscr{C}_{q}))$. Also from (\ref{eq:covar_goal_maxcovar}), $ \lambda_{\mathrm{max}} ( \Sigma_{hN} (\Sigma_{q}, \mathscr{C}_{q})) \leq \lambda_{\mathrm{min}}(\Sigma^{-}_{\mathrm{max}})$. Therefore, $ \exists \ \epsilon^{+}$ s.t.\ $ f(\epsilon^{+}; \lambda) =  \lambda_{\mathrm{min}}(\Sigma^{-}_{\mathrm{max}})$ s.t.\ $ f(\epsilon; \lambda) > \lambda_{\mathrm{min}}(\Sigma^{-}_{\mathrm{max}}) \ \forall \ \epsilon > \epsilon^{+}$.

    From the specified $\lambda$ as before (\ref{eq:lambda_0}) (\ref{eq:lambda_k}), and the existence of $\epsilon^{+}$ (note that the existence of such an $\epsilon^{+}$ and it's value is independent of the choice of the planning scene parameters), we choose $ \{ (\alpha_{x}, \epsilon_{x}), (\alpha_{u}, \epsilon_{u}) \} $ s.t.\ $\epsilon_{s,k}(\alpha_{x}, \epsilon_{x}; \lambda), \epsilon_{u,k}(\alpha_{u}, \epsilon_{u}; \lambda) > \epsilon^{+} \ \forall \ k$.
    
    For such a choice of $\lambda$, $ \{ (\alpha_{x}, \epsilon_{x}), (\alpha_{u}, \epsilon_{u}) \}$, and $ \epsilon \in (\epsilon^{+}, \operatorname{min}(\epsilon_{s}(\alpha_x, \epsilon_x; \lambda), \epsilon_{u}(\alpha_u, \epsilon_u; \lambda)) $,  $ (\mu_{q}, \Sigma(\epsilon)) \xlongrightarrow[kN]{\mathscr{C}(\lambda)} (\mu_{\mathrm{cand}}, \Sigma_{\mathrm{max}})$ and $ \nexists \ \mathscr{C}_{q^{+}} $ s.t.\ $ (\mu_q, \Sigma(\epsilon)) \xlongrightarrow[hN]{\mathscr{C}_{q^{+}}} (\mu_{\mathrm{cand}}, \Sigma^{-}_{\mathrm{max}})$. Hence, proved.
    
\end{proof}

\begin{proof}[\textbf{Proof of Theorem}~\ref{theorem:coverage}]
    We define the h-$\operatorname{BRS}$ of a distribution $p$ as the following: 
    h-$\operatorname{BRS}(p) =  \{ (\mu_{q}, \Sigma_{q}) \vert \operatorname{FEASIBLE}(q, p, hN) = \operatorname{TRUE} \}$.
    Using the above definition, the h-$\operatorname{BRS}(\mathcal{T})$ of a tree of distributions $\mathcal{T}$ is defined as,
    \begin{align}
        \text{h-}\operatorname{BRS} = \bigcup_{i \in \nu(\mathcal{T})} \text{($h - d_{i}$)-}\operatorname{BRS}(\nu_{i}) \label{eq:tree_brs_def}
    \end{align}
    where $ d_{i} $ is the distance of the $i$-th node from the goal node in terms of the number of hops. Let $ \mathcal{T}^{(n)} $ be the state of the tree at the end of $n$ iterations of the $\operatorname{ADD}$ procedure. We first show that $ \exists \ \{ ( \alpha_{x}, \beta_{x}, \epsilon_{x} ), ( \alpha_{u}, \beta_{u}, \epsilon_{u} ) \}$ s.t.\ h-$\operatorname{BRS}( \mathcal{T}^{(n+1)} _{\mathrm{MAXCOVAR}}) \supset \text{h-}\operatorname{BRS}( \mathcal{T}^{(n+1)} _{\mathrm{ANY}})$ where $ \mathcal{T}^{(n+1)}_{\mathrm{MAXCOVAR}}$, $ \mathcal{T}^{(n+1)}_{\mathrm{ANY}}$ are trees obtained from a common $\mathcal{T}^{(n)}$ by following one more iteration of the $\operatorname{ADD}$ procedure with the edge constructed according to the $\operatorname{MAXCOVAR}$ algorithm that explicitly maximizes the minimum eigenvalue of the initial covariance  versus any other algorithm for edge construction $\operatorname{ANY}$. The $\operatorname{ANY}$ procedure could for instance sample belief nodes randomly, or add edges that optimize for a performance regularized objective of the minimum eigenvalue of the initial covariance matrix.

    The $(n+1)$-th iteration proceeds by attempting to add a sampled mean $\mu_{q}$ to an existing node $ \nu^{(n)} $ on the tree, where $\nu^{(n)}$ is the node sampled on the $(n+1)$-th iteration. We have,
    \begin{align}
        \Sigma_{\mathrm{max}}, \mathcal{C}_{\mathrm{max}} \longleftarrow \operatorname{MAXCOVAR}(\mu_{q}, ( \mu^{(n)}, \Sigma^{(n)}), N),
    \end{align}
    and,
    \begin{align}
        \Sigma_{\mathrm{any}}, \mathcal{C}_{\mathrm{any}} \longleftarrow \operatorname{ANY}(\mu_{q}, ( \mu^{(n)}, \Sigma^{(n)}), N),
    \end{align}
    s.t.\ $ \lambda_{\mathrm{min}}(\Sigma_{\mathrm{max}}) > \lambda_{\mathrm{min}}(\Sigma_{\mathrm{any}})$. Consider the two nodes added at the $(n+1)$-th iteration: $ \nu^{(n+1)}_{\mathrm{MAXCOVAR}} = (\mu_{q}, \Sigma_{\mathrm{max}})$; and $ \nu^{(n+1)}_{\mathrm{ANY}} = (\mu_{q}, \Sigma_{\mathrm{any}})$ with distance from the goal node as $ d^{(n+1)} $. From Lemma~\ref{lemma:sigma_brs_lemma}, $ \exists \ \{ ( \alpha_{x}, \beta_{x}, \epsilon_{x} ), ( \alpha_{u}, \beta_{u}, \epsilon_{u} ) \}$ s.t.\  $ \text{t-}\operatorname{BRS}(\nu^{(n+1)}_{\mathrm{MAXCOVAR}}) \supset \text{t-}\operatorname{BRS}(\nu^{(n+1)}_{\mathrm{ANY}}) \ \forall \ t \geq 1$. Therefore,
    \begin{align}
        \text{$(h-d^{(n+1)})$-}\operatorname{BRS}(\nu^{(n+1)}_{\mathrm{MAXCOVAR}}) \supset \text{$(h-d^{(n+1)})$-}\operatorname{BRS}(\nu^{(n+1)}_{\mathrm{ANY}}), \label{eq:hbrs_nplus1}
    \end{align}
    $ \forall \ h > d^{(n+1)} $. From (\ref{eq:tree_brs_def}),
    \begin{align}
        \text{h-}\operatorname{BRS}(\mathcal{T}^{(n+1)}_{\mathrm{MAXCOVAR}}) &= \text{h-}\operatorname{BRS}(\mathcal{T}^{(n)}) \bigcup \nonumber \\ &(h - d^{(n+1)})-\operatorname{BRS}( \nu^{(n+1)}_\mathrm{MAXCOVAR}),
    \end{align}
    and,
    \begin{align}
        \text{h-}\operatorname{BRS}(\mathcal{T}^{(n+1)}_{\mathrm{ANY}}) &= \text{h-}\operatorname{BRS}(\mathcal{T}^{(n)}) \bigcup \nonumber \\ &(h - d^{(n+1)})-\operatorname{BRS}( \nu^{(n+1)}_\mathrm{ANY}).
    \end{align}
    From (\ref{eq:hbrs_nplus1}), it follows that there exist planning scenes such that $ \text{h-}\operatorname{BRS}(\mathcal{T}^{(n+1)}_{\mathrm{MAXCOVAR}}) \supset \text{h-}\operatorname{BRS}(\mathcal{T}^{(n+1)}_{\mathrm{ANY}})$ when starting from a common tree state $ \mathcal{T}^{(n)} $. Considering the common tree state as just a singleton set of the goal distribution, $ \mathcal{T}^{(0)} = \{\mathcal{G}\}$. Therefore, there exist planning scenes such that $\text{h-}\operatorname{BRS}(\mathcal{T}^{(1)}_{\mathrm{MAXCOVAR}}) \supset \text{h-}\operatorname{BRS}(\mathcal{T}^{(1)}_{\mathrm{ANY}})$. The result for a general $r$ follows from the above. Hence, proved.
    
\end{proof}

\end{document}